\renewcommand{\paragraph}[1]{\noindent\textbf{#1.}}
\def\Eqref#1{Equation (\ref{#1})}
\newcommand{\RNum}[1]{\uppercase\expandafter{\romannumeral #1\relax}}
\begin{document}
\title{PMGDA: A Preference-based Multiple Gradient Descent Algorithm}

\author{
    Xiaoyuan Zhang ~\IEEEmembership{Member,~IEEE},
    Xi Lin,
    Qingfu Zhang ~\IEEEmembership{Fellow,~IEEE}
    \thanks{All authors are with the Department of Computer Science, City University of
        Hong Kong (email: xzhang2523-c@my.cityu.edu.hk, qingfu.zhang@cityu.edu.hk). Corresponding to Prof. Qingfu Zhang. 
}}

\maketitle
\begin{abstract}
It is desirable in many multi-objective machine learning applications, such as multi-task learning with conflicting objectives and multi-objective reinforcement learning, to find a Pareto solution that can match a given preference of a decision maker. These problems are often large-scale with available gradient information but cannot be handled very well by the existing algorithms. To tackle this critical issue, this paper proposes a novel predict-and-correct framework for locating a Pareto solution that fits the preference of a decision maker. In the proposed framework, a constraint function is introduced in the search progress to align the solution with a user-specific preference, which can be optimized simultaneously with multiple objective functions. Experimental results show that our proposed method can efficiently find a particular Pareto solution under the demand of a decision maker for standard multiobjective benchmark, multi-task learning, and multi-objective reinforcement learning problems with more than thousands of decision variables.

Code is available at: \url{https://github.com/xzhang2523/pmgda} \footnote{Our code is current provided in the \texttt{pgmda.rar} attached file and will be open-sourced after publication.}.

\end{abstract}
	
\begin{IEEEkeywords}
    Multi-objective Optimization, Multi-Task Learning, Multi-objective Deep Reinforcement Learning, Preference-based Optimization. 
\end{IEEEkeywords}
 
\IEEEpeerreviewmaketitle

\section{Introduction}
\IEEEPARstart{R}{ecently}, Multi-Objective Optimization (MOO) concepts and methodologies have received growing attention in the machine learning community since many machine learning applications can be naturally modeled as MOO problems ~\cite{jin2006multi,sener2018multi,zhou2021multiple}.
For example, a face recognition system has to balance prediction accuracy on different attributes of the face~\cite{ruchte2021cosmos,liu2018large}. A recommendation system needs to balance several metrics, such as accuracy, novelty, and diversity of products \cite{zheng2021multi,haolun2022multi}. Very often, no single best solution can optimize all these objectives simultaneously. Instead of a single optimal solution, there exists a set of Pareto-optimal solutions with different optimal trade-off ratios.


Modern machine learning systems usually use deep neural networks as the core model. MOO problems in these machine learning systems~\cite{sener2018multi,lin2019pareto,lin2020controllable,liu2018large,ruchte2021cosmos} often share the following characteristics:
\begin{itemize}

\item Large scale decision space: Since a deep network model can often have thousands to millions of parameters to tune~\cite{lecun2015deep,goodfellow2016deep},  it is inevitable that these optimization problems are of large scale. Evolutionary algorithms cannot handle them very efficiently. 

\item Almost everywhere smooth: Most widely used loss functions in deep learning are smooth, and their gradients are easy to obtain by backpropagation. In addition, these loss functions can usually be optimized to their global optimal solutions by gradient methods because neural network models are often overparameterized. \cite{lopez-paz2018easing, allen2019convergence}. 

\end{itemize}

\begin{table*}
\centering
\caption{Comparison of PMGDA with other gradient-based MOO methods.} \label{tab:compare}
\begin{tabular}{lll}
\toprule
Method & Solution type & Shortcomings \\
\midrule
EPO \cite{pmlr-v119-mahapatra20a} & Exact Pareto solutions.  & \begin{tabular}[c]{@{}l@{}}(1) EPO is only designed for minimization tasks. \\ (2) EPO can only deal with positive objectives. \\ (3) Solution precision of EPO is bad on a number of MOO problems. \end{tabular} \\
\midrule
COSMOS \cite{ruchte2021cosmos} & Approximate exact Pareto solution. & Solution quality is heavily depended on its control parameter $\mu$ and this parameter is difficult to tune.  \\
\midrule
PMTL \cite{lin2019pareto} & Pareto solutions restricted in sector Areas. & Challenging to apply for objectives $m$>2 due to sector divisions in 3-D (or more) space is difficult. \\
\midrule
mTche \cite{ma2017tchebycheff} & Exact Pareto solutions. & 
\begin{tabular}[c]{@{}l@{}}(1) mTche is inefficient for MO problems with more than two objectives ($m>2$). \\ (2) Final performance is bad on complex MO reinforcement learning problems.  \end{tabular} \\
\midrule
PMGDA & \begin{tabular}[c]{@{}l@{}}Pareto solutions under inequality/\\equality  constraints. \\ PMGDA is particular designed for \\ exact and ROI Pareto solutions.  \end{tabular} & PMGDA solves a LP/QP (Similar to EPO/PMTL), taking more time than mTche.   \\
\bottomrule
\end{tabular}
\end{table*}

Due to these characteristics, gradient-based methods are particularly well-suited for solving MOO problems in machine learning. It is, however, very costly to find a set of distinct Pareto solutions (e.g., deep neural networks) to approximate the entire Pareto front accurately. Some efforts have been made to design multi-objective gradient algorithms to find one or a few Pareto solutions~\cite{sener2018multi,lin2019pareto,liu2021conflict,zhou2021multiple,milojkovic2019multi} with successful applications on multi-task learning, image classification, face recognition, and other scene understanding tasks~\cite{kendall2018multi,sener2018multi} based on the Multiple Gradient Descent Algorithm (MGDA)~\cite{fliege2000steepest}. However, a significant limitation of most of these gradient algorithms \cite{sener2018multi, lin2019pareto, fliege2000steepest, desideri2012mutiple} is their lack of consideration for the decision maker's preferences. 

With a very limited number of solutions (only one or a few), it is of great importance to develop algorithms for finding Pareto solutions satisfying user's specialized demands. Some initial studies have explored approaches for finding Pareto objective solutions that align with user preferences. For instance, Pareto Multi-Task Learning (PMTL)~\cite{lin2019pareto} identifies Pareto solutions within specific sector regions, and the COSMOS method~\cite{ruchte2021cosmos} incorporates a penalty term to approximate a Pareto solution near a preference vector. The Exact Pareto Optimization (EPO) method from Mahapatra et al.~\cite{pmlr-v119-mahapatra20a} focuses on locating solutions along a preference vector, claiming to find the `exact' solution with preference. However, these methods are still nascent and have their own disadvantages. For example, PMTL~\cite{lin2019pareto} struggles with efficiency for more than three objectives, and COSMOS~\cite{ruchte2021cosmos} is limited to approximate solutions. EPO employs a complex and specific search mechanism to divide objectives into different sets with different treatments. This approach sometimes fails to find the preferred solutions~\footnote{If the linear programming problem defined in EPO fails, EPO relies on optimizing a linear combination of objectives.} and performs poorly with more than two objectives. A comprehensive comparison with previous methods is presented in \Cref{tab:compare}.

Different from previous works, this paper introduces a general Preference-based Multiple Gradient Descent Algorithm, termed PMGDA, to effectively address user-specific preferences within the MGDA framework~\cite{fliege2000steepest}. The preference assignment in PMGDA is flexible and general, which includes but is not limited to finding the exact Pareto solution~\cite{pmlr-v119-mahapatra20a}. The proposed PMGDA has a predict-and-correct approach to optimize the preference and all objective functions simultaneously. This paper makes the following main contributions:

\begin{enumerate}

    \item We propose a novel framework for identifying Pareto solutions that align with a wide range of general preference functions, where finding exact Pareto solutions is a special case. Our approach is the first to effectively find Pareto solutions that meet specific region of interest (ROI) \cite{li2018integration,thiele2009preference} requirements.
    
    \item We develop efficient algorithms for both the prediction and correction steps for PMGDA, which makes it suitable to tackle large-scale multi-objective deep neural network training problems.
    
    \item We rigorously test PMGDA on well-known synthetic challenges, deep multi-task learning, and multiobjective reinforcement learning problems. PMGDA surpasses previous approaches in terms of solution precision and computational efficiency for finding exact Pareto solutions, while it is the first approach to successfully find Pareto solutions that satisfy the ROI constraint.  

\end{enumerate}

\section{Background} \label{sec_bg}
\ssec{Notations}
In this paper, the following notations are used (refer to \Cref{tab:notation}): Vectors are denoted using bold letters (e.g. $\vv$). Matrices are denoted by an uppercase bold letters (e.g., $\vG$).
The index set $\{1,\ldots,m\}$ is shorthanded by $[m]$.
We use three symbols to compare the relationship between two vectors $\vx$ and $\vy$:

For two vectors, $\vx$ and $\vy$, $\vx \prec \vy$ if $x_i < y_i$ for all $i \in [m]$.
$\vx \preceq \vy$ if $x_i \leq y_i$ for all $i \in [m]$.
$\vx \precs \vy$ if there exists $j \in [m]$ such that $x_j < y_j$ and $x_i \leq y_i$ for all $i \in [m] \setminus j$.
In this article, $x_i$ denotes the $i$-th entry of vector $\vx$, and $\vx^{(i)}$ denotes the $i$-th vector in a set of vectors.

\begin{table}[h!]
    \centering
    \caption{Notations used in this paper.} \label{tab:notation}%
    \begin{tabular}{ll}
        \toprule
        Variable & Definition \\
        \midrule
        $\vth$ & The decision variable. \\
        $n$     & The dimension of the decision variables. \\
        $m$     & The number of objectives. \\
        $[m]$     & The index set $\lbr{1, \ldots, m}$. \\
        $\Delta^{m-1}$ & The (m-1)-simplex, $\lbr{\vy|\sum_{i=1}^m y_i=1, y_i \geq 0 }$. \\
        $\vd$     & The search direction. \\
        $\vv$     & The direction (argument) used in the optimization problem. \\
        $\vlam$ & An $m$-D preference vector.  \\
        $\vL(\vth)$ & The objective functions. $L(\vth)=[L_1(\vth),\ldots,L_m(\vth)]$. \\
        $h(\vth)$ & The constraint function for the $\vlam$-exact solution.   \\
        $\hat{\mtx x}$ & $\hat{\mtx x}=\frac{\vx}{\norm{\vx}}$. I.e, $\hat{\mtx x}$ is a unit vector. \\
        $\mtx \mu, \nu$ & Lagrange multipliers. \\
        \bottomrule
    \end{tabular}%
\end{table}%

\ssec{Basic MOO Definitions} \label{sec:moobasic}
A MOO problem can be expressed as the following formulation: 
\begin{equation}
    \label{basic_mop_problem}
    \min_{\vth \in \Theta \subset \R^n} \vL (\vth)=(L_1(\vth), L_2(\vth),\ldots,L_m(\vth)) ,
\end{equation}
where $\vth$ is a $n$-D decision variable, and $\Theta$ is its decision space. Very often, no single solution can optimize all the objectives at the same time. Instead, there exists a set of solutions, called Pareto solutions, which can represent different optimal trade-offs among the objectives. The following MOO definitions are used in this paper. We have the following definition:

\begin{definition}[Dominance \cite{miettinen2012nonlinear}(Chap. 2.2)]
    Given two candidate solutions $\vth^{(a)}, \vth^{(b)} \in \Theta$, for a minimization problem, we say solution $\vth^{(a)}$ dominates $\vth^{(b)}$, if $\vL(\vth^{(a)}) \precs \vL(\vth^{(b)})$.   
\end{definition}

\begin{definition}[Pareto Optimality \cite{miettinen2012nonlinear}(Chap. 2.2)] 
    A solution is  Pareto optimal if no other solution belongs to $\Theta$ dominates it. The set of all the  Pareto solutions is called the Pareto set (\texttt{PS}); its image in the objective space is called the Pareto front ($\Tau$), $\Tau= f \circ \mathtt{PS}$.  
\end{definition}

\begin{definition}[Pareto Stationary Solution]
For objectives \( L_1, \ldots, L_m \) under the constraint \( h(\vth) \), a solution \( \vth \) is deemed Pareto stationary if it satisfies the following condition: there are no vectors \( \mu_1, \ldots, \mu_m \), and \( \nu \) (not all zero) for which
\begin{equation}
\sum_{i=1}^m \mu_i \nabla L_i + \nu \nabla h = 0, \quad \text{with} \quad \mu_i \geq 0. 
\end{equation}
\end{definition}

\begin{figure}%
    \centering
    \sfig[Exact Pareto solution.]{\ig[width= \fwdt \tw]{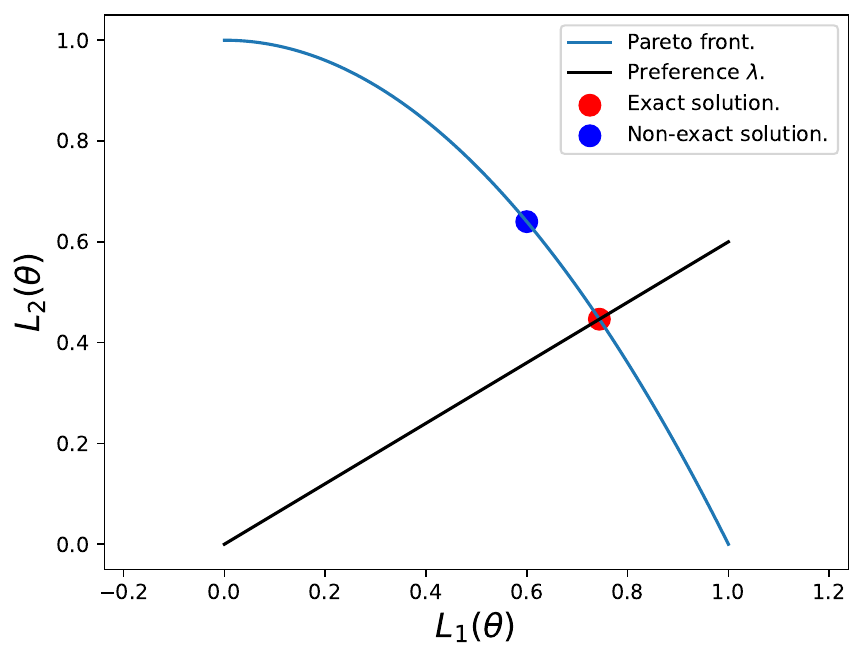}}
    \sfig[Pareto solutions with a region of interest (ROI).]{\ig[width= \fwdt \tw]{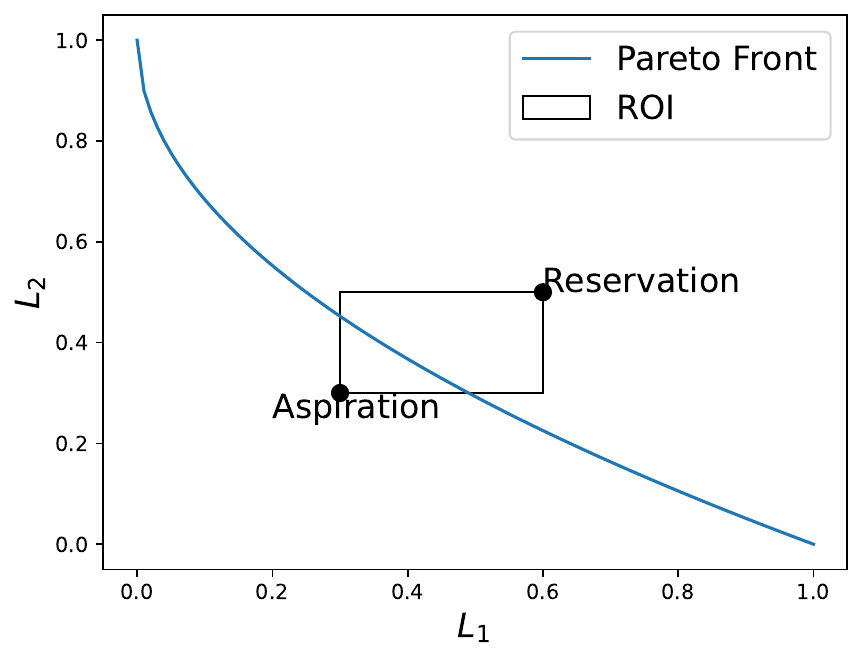}}
    \caption{Two particular cases of preference-based MOO. The left figure: the exact Pareto solution aligns with a given preference vector. The right figure: ROI Pareto solutions satisfy the ROI constraint. 
    } \label{fig:prefmoo}
\end{figure}
 
\noindent A solution is Pareto stationary means that, this solution can not be locally improved, which is a necessary condition for Pareto solution. Since a Pareto set could contain more than one single solution, and the possible infinite Pareto solutions are not comparable (e.g., non-dominated) with each other, therefore, it is necessary to make a further decision among the Pareto solutions \cite{roy2023optimization}. Preference-based MOO aims to search for a Pareto solution that satisfies a user-defined preference function $h(\vth)=0$. A direct application is to search the so-called `exact' Pareto solutions.  

\begin{definition}[`Exact' Pareto Solution]
     Let $\vlam=(\lambda_1, \ldots, \lambda_m)$ be a positive preference vector, i.e., $\sum_{i=1}^{m} \lambda_i=1$ and $\lambda_i>0$, $\forall i \in [m]$. A solution $\vth$ is called an `exact' Pareto solution \cite{pmlr-v119-mahapatra20a} for the preference $\vlam$ if this solution is Pareto optimal and satisfies:
    \begin{equation}
      \frac{L_1(\vth)}{\lambda_1}=\frac{L_2(\vth)}{\lambda_2}=\ldots=\frac{L_m(\vth)}{\lambda_m}.
      \label{eqn:lambda}
    \end{equation}
\end{definition}

Another application of preference-based MOO is that, in many cases, particularly when dealing with large problem sizes, the user may only be interested in a specific region of interest (ROI) on the Pareto front \cite{li2018integration}. This falls under the category of preference-based MOO \cite{thiele2009preference}. Previous literature has defined the ROI using the reservation point $\overline{\mtx y}$ and the aspiration point $\underline{\mtx y}$. The ROI can be represented as $\mathtt{ROI} = \{ \mtx y | \underline{\mtx y} \preceq \mtx y \preceq \overline{\mtx y} \}$ \cite{gonzalez2021preference}. Illustrations of `exact' solution and ROI are shown in Figure \ref{fig:prefmoo}.

\subsection{The Multiple Gradient Descent Algorithm (MGDA)}
In this section, we briefly review MGDA ~\cite{fliege2000steepest}, which serves as a fundamental tool of our method. MGDA extends the single objective steepest gradient descent algorithm for solving MOO problems. It updates the parameters as follows: $\vth_{k+1} = \vth_k + \eta \vd$, where $\eta$ is the learning rate and $\vd$ is the MGDA search direction. The key idea of MGDA is to find a valid gradient direction to decrease all objectives $L_i(\vth)$ at the same time. 
One way for computing $\vd$ is as follows \cite{desideri2012mutiple}: 
\bee \label{eqn:mgda1}
    (\vd, \alpha^*)= & \argmin_{(\vv, \alpha) \in \mathbb{R}^{m+1}} \alpha + \frac{1}{2}  \norm{ \vv }^2, \\
    \st  & \; \nabla {L_i}^\top \vv \leq \alpha, \quad i \in [m] . \\
\ee
We would like to mention a less explored way \cite{fliege2000steepest} that also simultaneously reduces all objectives, which serves as a foundation of the proposed PMGDA method. In this way, to find a valid direction $\vd$, we solve the following problem:
\bee \label{eqn:mgda2}
    & (\vd, \alpha^*)= \argmin_{(\vv, \alpha) \in \mathbb{R}^{m+1}} \alpha, \\
    & \st \; \left\{
    \begin{aligned}
        & \nabla {L_i}^\top \vv \leq \alpha, \quad i \in [m], \\
        & \norm{\vv}_{\infty} \leq 1. \\
    \end{aligned}
    \right.
\ee
If the optimal $\alpha^*<0$, then $\vd$ is a direction that can reduce all the objectives, and each objective $L_i$ can be decreased by about $\eta \alpha^*$. When $\alpha^*=0$,  no direction can optimize all the objectives simultaneously, and thus $\vth_k$ is Pareto stationary \cite{fliege2000steepest}.

\section{The Preference-based Multi-Gradient Descent Algorithm (PMGDA)}
In this section, we introduce the Preference-based MGDA (PMGDA) method. Initially, we define several constraint functions $h(\vth)$ in \Cref{sec:pfunc} to meet specific user requirements. Next, \Cref{sec:frame} outlines our conceptual framework for addressing this problem. Detailed methodologies are further elaborated in \Cref{sec:predict}, \ref{sec:correct}, and \ref{sec:reuse}. Subsequently, \Cref{sec:analysis} provides an analysis of the complexity involved, and \Cref{sec:morl} discusses the extension of PMGDA to complex multi-objective reinforcement learning tasks.

\label{sec_method}
\subsection{Preference Functions} \label{sec:pfunc}
\sssec{The exact preference function} 
There are several functions available to align a Pareto objective vector $\vL(\vth)$ with a preference vector $\vlam$. One commonly used function is the Euclidean distance from a Pareto objective to this preference vector \cite{zhang2007moea}. 
\begin{equation} \label{eqn:h_pbi}
    h(\vth) = \norm{\vL(\vth) - \frac{\vL(\vth) \cdot \vlam}{\norm{\vlam}} \vlam}
\end{equation}

This function indicates a solution is exact if and only if $h(\vth)$ is equal to zero.

\sssec{The Region of Interest (ROI) preference function}

ROI is a specific region defined by the decision maker. In addition to the regular ROI discussed in \Cref{sec:moobasic}, we also consider the ROI as a spherical shape. The preference function for the ROI constraint is defined as follows:

\begin{equation} \label{eqn:h_ro}
h(\vth) =
\begin{cases}
0, & \text{if } \vL(\vth) \in \texttt{ROI} \\
\text{dist}(\vL(\vth), \texttt{ROI}), & \text{otherwise}
\end{cases}
\end{equation}
Here, $\text{dist}(\vL(\vth),\texttt{ROI})$ represents the Euclidean distance to the $\texttt{ROI}$ set. $h(\vth) =0$ means that this solution satisfies the ROI constraint. 

\subsection{The Predict-and-Correct Framework}  \label{sec:frame}
The objective of the preference-based MOO (MOO) problem is to identify a Pareto solution that fulfills the preference constraint \(h(\vth) = 0\). Drawing inspiration from Gebken et al.'s work \cite{gebken2017descent}, we propose a predict-and-correct algorithmic framework. In this framework, let \(\vth_k\) represent the solution at iteration \(k\), with the subsequent solution \(\vth_{k+1}\) determined as follows:

\begin{itemize}
\item When $h(\vth_k) = 0$, we do the prediction step, which reduces all objective values $L_i$ without hurting the preference constraint $h(\vth_k)$: 
\begin{equation}  \label{eqn:predict}
    \begin{split}
    (\vd, \alpha^*)= & \argmin_{(\vv, \alpha) \in \mathbb{R}^{m+1}} \alpha + \frac{1}{2}  \norm{ \vv }^2, \\
    \st  & \; \begin{cases}
        \nabla {L_i}^\top \vv \leq \alpha, \quad i \in [m], \\
        \nabla h ^\top \vv = 0.
    \end{cases}  
    \end{split}
\end{equation}
\item Otherwise, we do the correction step. The major requirement is to reduce $h(\vth_{k})$ while the objective values $L_i$ should not be deteriorated too much. Specifically, we compute the direction $\mtx d$ as follows: 
\begin{equation} \label{eqn:correct}
    \begin{split}
        & (\vd, \alpha^*) = \argmin_{(\vv \in \R^n , \alpha \in \R)} \alpha  \\
        & \st \left\{
        \begin{aligned}
            & \g L_i^\top \vv \leq \alpha, \quad i \in [m]   \\
            & \g h^\top \vv \leq -\sigma \norm{\g h } \cdot \norm{\vv}, \\
            & 0 < \norm{\vv} \leq 1.  \\
        \end{aligned}
        \right.			
    \end{split}
\end{equation}
where $0<\sigma<1$ is a preset control parameter. To prevent $\vv$ from becoming a zero vector, we need to constrain $\norm{\vv} > 0$. 
\item Finally, update the decision variable
\begin{equation}
    \label{eqn:update}
    \vth_{k+1}=\vth_k + \eta \vd.
\end{equation}
\end{itemize}
We would like to offer some remarks regarding the conceptual algorithm we just introduced. During the prediction phase, there are two critical constraints to consider. The first one ensures that each objective, denoted as $L_i$, can be reduced by a minimum of $\alpha$. The second constraint is designed to guide the search within the tangent space, ensuring that the resulting $h(\vth_k)$ value remains minimal. It's important to note that the Problem (\ref{eqn:predict}) is essentially a variation of Problem (\ref{eqn:mgda1}) and (\ref{eqn:mgda2}) with an additional quality constraint $\nabla h ^\top \vv = 0$. Therefore, the techniques used for analyzing and solving these equations can also be effectively applied to Problem (\ref{eqn:predict}).

In the correction phase, the first constraint outlined in Problem (\ref{eqn:correct}) aligns with the well-established Armijo rule (Armijo, 1966)~\cite{armijo1966minimization}. This constraint is designed to ensure a sufficient reduction in the constraint function $h(\vth)$. Such a reduction is pivotal for steering $h(\vth)$ towards a stationary solution, a concept validated by Bertsekas in 2016 \cite{bertsekas2016nonlinear}. The second constraint regulates the increment of each $L_i$ during the search process. The third constraint is critical in establishing the search boundary, ensuring that the vector $\vv$ remains non-zero. When the correction step is exactly a retraction map as defined in ~\cite{absil2012projection} onto the manifold $h(\vth)=0$ at $\vth_k$, our method is a special case of the constrained multiple gradient method introduced in ~\cite{gebken2017descent} and converges to a Pareto stationary solution satisfying the exact constraint.

Both Problem \eqref{eqn:predict} and \eqref{eqn:correct} have $n+1$ decision variables, where $n$ represents the number of parameters in a neural network. Given that neural networks typically have a large number of parameters, $n$ is consequently large. Additionally, Problem \eqref{eqn:correct} involves nonlinear constraints. To tackle these challenges, in the following sections, we will develop algorithms specifically designed to efficiently manage the large parameter count, $n$, in both the prediction and projection steps of the solution process.  


Our proposed algorithm differs from Gebken et al.~\cite{gebken2017descent}, which is primarily conceptual and not directly suited for large-scale neural networks. Their assumption of an available projection operator (\(\vth \leftarrow \text{proj}(\vth - \eta \vd)\)) is unrealistic in the context of neural networks. We have significantly modified this correction step in Problem \eqref{eqn:correct} to enhance learning and investigated the prediction algorithm in its dual form. Consequently, our correction and prediction steps involve only $O(m)$ decision variables, compared to $O(n)$ in Gebken et al.~\cite{gebken2017descent}'s original work, where $m$ is the number of objectives and $n$ is the number of decision variables. Given that $n \gg m$, our approach is the first practical method for finding preference-based Pareto solutions in neural networks.

\subsection{An Efficient Prediction Algorithm} \label{sec:predict}
To solve problem (\Cref{eqn:predict}) more efficiently, we explore its dual form. We define the Lagrangian function as \(\mathcal{L}(\alpha, \vv, \vmu, \nu) = \alpha + \frac{1}{2} \|\vv\|^2 + \sum_{i=1}^m \mu_i (\nabla {L_i}^\top \vv - \alpha) + \nu (\nabla h^\top \vv)\), where \(\mu_i \geq 0\). Setting the derivative \(\frac{\partial \mathcal{L}}{\partial \vv} = \mtx 0 \) leads to \(\vv=-\sum_{i=1}^m \mu_i \nabla L_i - \nu \nabla h\). Additionally, taking \(\frac{\partial \mathcal{L}}{\partial \alpha}=0\) results in \(\vmu \in \Delta^{m-1}\). Consequently, the dual problem of \Cref{eqn:predict} is formulated as:

\begin{equation} \label{eqn:predict:dual}
    \min_{(\vmu, \nu) \in \Delta^{m-1} \times \mathbb{R}} \norm{ \sum_{i=1}^m \mu_i \nabla L_i + \nu \nabla h }.
\end{equation}

The resulting optimal direction \(\vd = -\sum_{i=1}^m \mu_i^* \nabla L_i - \nu^* \nabla h\), where \(\mu_i^*\) and \(\nu^*\) are solutions to problem of \Eqref{eqn:predict:dual}. It is noteworthy that \Cref{eqn:predict:dual} only involves \(m+1\) decision variables and constitutes a semi-positive definite programming problem, which can be solved very efficiently compared with it original problem (\Cref{eqn:predict}). 

\subsection{An Efficient Correction Algorithm} \label{sec:correct}
The original correction problem (\Eqref{eqn:correct}) has a large number of decision variables $(n)$. $n$ is particularly large for a neural network problem. Additionally, the last two constraints are nonlinear, which makes the problem challenging to optimize.
To make the problem tractable, we assume the search direction $\vd$ is a linear combination of gradients of the objective functions and the constraint function $h(\vth_k)$. Thus we can add the following condition to Problem (\ref{eqn:correct}),
\begin{equation} \label{eqn:v_compose}
    \vd = -\sum_{i=1}^{m} \mu_i \widehat{\g L_i} - \mu_{m+1} \widehat{\g h}.
\end{equation}
Here, the notation $\hat{(\cdot)}$ is used to denote vector normalization, where $\hat{\vx}=\frac{\vx}{\|\vx\|}$. 
Without loss of generality, we can assume that $(\mu_1,\ldots,\mu_{m+1}) \in \Delta^m$, as we can adjust the learning rate $\eta$ in Equation \eqref{eqn:update} to scale the norm of updating direction $\vd$. 
\begin{lem} \label{prop:cons}
    When $\vth_{k}$ is not Pareto stationary, $0 < \norm{\vv} \leq 1$.    
\end{lem}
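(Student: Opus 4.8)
The plan is to prove the two bounds separately, using the structural assumption from \eqref{eqn:v_compose} that the correction direction takes the form $\vv = -\sum_{i=1}^m \mu_i \widehat{\nabla L_i} - \mu_{m+1}\widehat{\nabla h}$ with $(\mu_1,\ldots,\mu_{m+1}) \in \Delta^m$. The upper bound $\norm{\vv} \le 1$ should hold unconditionally and requires no hypothesis on $\vth_k$: since $\vv$ is a convex combination of the unit vectors $\widehat{\nabla L_i}$ and $\widehat{\nabla h}$, the triangle inequality gives $\norm{\vv} \le \sum_{i=1}^m \mu_i \norm{\widehat{\nabla L_i}} + \mu_{m+1}\norm{\widehat{\nabla h}} = \sum_{i=1}^{m+1}\mu_i = 1$. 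Hence the only substantive claim is the strict positivity $\norm{\vv} > 0$ under the hypothesis that $\vth_k$ is not Pareto stationary.

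For strict positivity I would argue by contraposition: assume $\vv = \mtx 0$ and deduce that $\vth_k$ must be Pareto stationary. Setting the combination to zero yields $\sum_{i=1}^m \mu_i \widehat{\nabla L_i} + \mu_{m+1}\widehat{\nabla h} = \mtx 0$. Unwinding the normalizations by absorbing the gradient norms into fresh coefficients — setting $\tilde\mu_i = \mu_i / \norm{\nabla L_i}$ for $i \in [m]$ and $\nu = \mu_{m+1}/\norm{\nabla h}$ — converts this identity into $\sum_{i=1}^m \tilde\mu_i \nabla L_i + \nu \nabla h = \mtx 0$. Because $(\mu_1,\ldots,\mu_{m+1}) \in \Delta^m$ sums to one, at least one $\mu_j$ is strictly positive, and dividing by a positive gradient norm preserves both nonnegativity and the not-all-zero property; note also that the derived $\nu \ge 0$ is a special case of the free-sign multiplier allowed for the equality constraint. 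This exhibits exactly the collection of nonnegative multipliers, not all zero, annihilating the gradients that characterizes a Pareto stationary solution, contradicting the hypothesis and forcing $\vv \neq \mtx 0$.

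The main obstacle to making this rigorous is the degenerate case in which some gradient vanishes, since then the normalization $\widehat{\nabla L_i}$ or $\widehat{\nabla h}$ in \eqref{eqn:v_compose} is ill-defined and the rescaling step is not licensed. I would dispatch these cases at the outset: if any $\nabla L_i = \mtx 0$, then the single multiplier $\mu_i = 1$ (others zero) already satisfies the stationarity condition, so $\vth_k$ is Pareto stationary and the lemma's hypothesis is vacuous; the same observation applies when $\nabla h = \mtx 0$. Consequently, under the lemma's hypothesis every relevant gradient is nonzero, the passage from the normalized to the unnormalized multipliers is legitimate, and the reduction to the stationarity condition goes through. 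The only remaining work is the bookkeeping verification that $(\tilde{\vmu}, \nu)$ is genuinely not all zero, which is immediate from $\sum_{i=1}^{m+1} \mu_i = 1$.
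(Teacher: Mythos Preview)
Your proposal is correct and follows essentially the same approach as the paper: the upper bound via the triangle inequality on a convex combination of unit vectors, and the strict positivity by contraposition against the definition of Pareto stationarity. In fact your argument is more complete than the paper's, which merely asserts the positivity claim without spelling out the rescaling from normalized to raw gradients or handling the degenerate zero-gradient case.
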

\begin{proof}
We first prove that $\norm{\vv} \leq 1$, since the following inequalities hold, 
$$
    \begin{aligned}
        \norm{\sum_{i=1}^{m+1} \mu_i v_i}  \leq \sum_{i=1}^{m+1} \norm{\mu_i v_i}  \leq \sum_{i=1}^{m+1} |\mu_i| = 1,
    \end{aligned}
$$
when $v_i$ is a unit vector and $\mu \in \Delta_{m+1}$. 
We then show that when $\vth_k$ is not Pareto stationary, there does not exist constants $\mu_1, \ldots, \mu_{m+1}$ such that $\vv = \sum_{i=1}^m \mu_i \widehat{\g L_i}+ \mu_{m+1} \widehat{\g h} = \mtx 0$, i.e., $\norm{\vv}>0$.
\end{proof}

Lemma \ref{prop:cons} implies that the last constraint in Problem \eqref{eqn:correct} always holds. Under the substitution \Cref{eqn:v_compose}, the first constraint in Problem \eqref{eqn:correct} becomes,
\begin{equation}
    \begin{aligned}
        \g h^\top \vv \leq - \sigma \norm{\g h} \cdot \norm{\vv}. 
    \end{aligned}
\end{equation}
Again, by Lemma \ref{prop:cons} ($\norm{\vv} \leq 1$), the previous inequality can be further simplified as:
\begin{equation} \label{eqn:constr1}
    \g h^\top \vv \leq -\sigma \norm{\g h}.
\end{equation}

\begin{figure}[h]
    \centering
    \ig[width=0.4\tw]{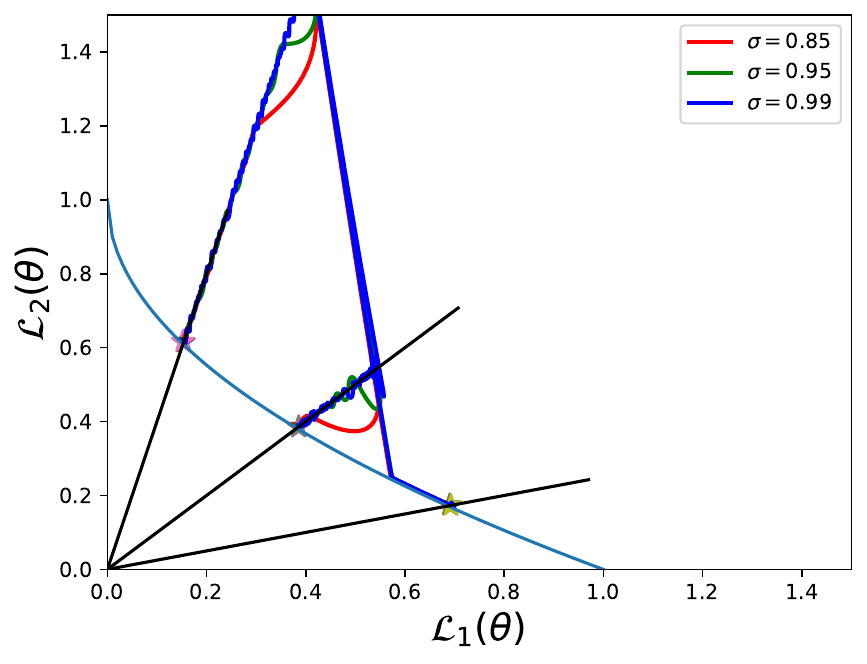}
    \caption{The optimization trajectory of $\sigma=0.85, 0.95, 0.99$. A small setting of $\sigma$ leads to a smooth optimization trajectory, while a large set of $\sigma$ leads to oscillations and a slow convergence rate. Exact Pareto solutions are found separately in 127, 144, and 305 iterations. Marker $\star$ denotes the final solutions. } 
    \label{fig:traj}
\end{figure}
Let $\vG$ (shape is $n \times (m+1)$) be the Jacobian matrix of the function $[\vL(\vth), h(\vth)]$. $\vG$ is formed by concatenating the columns $\nabla L_1$, $\nabla L_2$, ..., $\nabla L_m$, and $\nabla h$.
$$
    \vG = [\nabla L_1, \ldots, \nabla L_m, \nabla h].
$$
Let $\hat{\vG}$ be the normalized augmented Jacobian matrix, where each column is normalized. Then, with \Eqref{eqn:constr1} and Lemma \ref{prop:cons}, the original optimization problem in the correction step (\Eqref{eqn:correct}) can be simplified to the following easy-to-solve linear problem:
\begin{equation} \label{eqn:project:dual}
    \begin{aligned}	
        (\vmu^*, \alpha^*) = & \argmin_{(\vmu, \alpha) \in (\Delta^m \times \R)} \alpha, \\
        s.t.  &  \; (-\vG^\top \hat{\vG}) \vmu \leq \vb,
    \end{aligned}
\end{equation}
where $\vb$ is the vector for the linear constraint which is defined as $\vb = [ \underbrace{\alpha,...,\alpha}_{m+1}, -\sigma \| \g h \| ]^\top$.
We use $\tilde{\vmu}$ to denote the final decision variable, $\tilde{\vmu} = [\vmu, \alpha]$. In this way, Problem \eqref{eqn:project:dual} can be written in a compact matrix form that can be directly solved using the `cvxpy' library \cite{diamond2016cvxpy}. The formulation is as follows:
\begin{equation} \label{eqn:correct:mtx}
    \begin{aligned}	
        \tilde{\vmu}^* = & \argmin_{\tilde{\vmu} \in (\Delta^m \times \R)} \langle \; [\underbrace{0,\ldots,0}_{m+1},1], \tilde{\vmu} \; \rangle, \\
        \st  &  \; 
        \begin{bmatrix}
        \multirow{4}[0]{*}{\centering $-\vG^\top \hat{\vG}$} & -1 \\
         & \vdots \\
         & -1 \\ 
         & 0 \\
        \end{bmatrix} 
        \begin{bmatrix}
            \mu_1 \\ \vdots \\ \mu_{m+1} \\ \alpha
        \end{bmatrix}
        \leq 
        \begin{bmatrix}
            0 \\ \vdots \\ 0 \\ -\sigma \norm{\g h}
        \end{bmatrix}.
    \end{aligned}
\end{equation}
The shape of constraint matrix is $(m+1) \times (m+2)$, while the constraint vector is $(m+1)$. 

Problem (\ref{eqn:correct:mtx}) is a linear problem defined on a compact constraint set, which always has an optimal solution. After solving Problem (\ref{eqn:correct:mtx}), with a running time of $\mathcal{O}^*((2m+3)^{2.37})$ \footnote{$\mathcal{O}^*$ is used to hide the $m^{o(1)}$ and the $\log^{O(1)}(1/\delta)$ factors, details see \cite{pmlr-v119-mahapatra20a,cohen2021solving}.} as proven in \cite{cohen2021solving}, we obtain the optimal solution $\vmu^*$. Consequently, the search direction $\vd$ of \Eqref{eqn:correct} can be set as $\vd = \hat{\vG}^\top \vmu^*$.

The parameter $\sigma$ defines the angle between the search direction $\mtx d$ and the gradient $\g h$. If $\sigma$ is set to 1.0, then $h(\vth_k)$ is optimized by the classic gradient descent, and a small value of $\sigma$ relaxes the angle between the search direction $\mtx d$ and $-\g h$. A large $\sigma$  may make the convergence trajectory to oscillate, resulting in slow convergence. As shown in \Cref{fig:traj}, PMGDA behaves different with different $\sigma$. 

Since Problem \eqref{eqn:correct:mtx} has $(m+2)$ variables, where $m$ is the number of objectives (typically small), our proposed algorithm scales well to neural networks with a large number of parameters ($n$ is large).

\subsection{Reuse of Gradients} \label{sec:reuse}
For simple problems, obtaining the gradient of $\nabla h$ directly is cheap. However, for large-scale problems, such as neural networks, computing gradients becomes expensive. Note that $h(\vth)$ is a function that only involves objectives $L_i$'s. Once all the gradients $\nabla L_i$'s are obtained, the gradient of the constraint function $\nabla h$ can be estimated using the chain rule:

\begin{equation} \label{eqn:chain_rule}
    \nabla h = \sum_{i=1}^{m} \left( \frac{\partial h}{\partial L_i} \cdot \nabla L_i \right),
\end{equation}
\noindent where the first term $\frac{\partial h}{\partial L_i}$ are just scalars.

For the specific case when $m=2$ (two-objective problem) and $h(\cdot)$ is defined by \Eqref{eqn:h_pbi}, the partial derivatives $\frac{\partial h}{\partial L_i}$ can be calculated using the following closed-form:

\begin{equation}
    \begin{cases}
        \frac{\partial h}{\partial L_1} = \left(- \frac{\lambda_2}{\|\vlam\|}, \frac{\lambda_1}{\|\vlam\|} \right), & \text{if } \frac{L_1(\vth)}{\lambda_1} < \frac{L_2(\vth)}{\lambda_2}, \\
        \frac{\partial h}{\partial L_2} = \left( \frac{\lambda_2}{\|\vlam\|}, -\frac{\lambda_1}{\|\vlam\|} \right), & \text{otherwise},
    \end{cases}
\end{equation}

\noindent where $\vlam$ is the $m$-D preference vector. 

For other preference functions $h(\vth)$ which may not have an analytical gradient expression, it is possible to perform a lightweight backward propagation to obtain the values of $\frac{\partial h}{\partial L_i}$.

\parag{PMGDA as a Dynamic Weight Adjustment (DWA)}
We now demonstrate that the proposed PMGDA can be implemented even more efficiently through a dynamic weight adjustment mechanism. The final direction vector \(\vd\) can be succinctly expressed using the following equations, which is only a linear combination of gradients $\g L_i$'s.
\bee \label{eqn:dwa}
    \vd & = - \sbr{ \sum_{i=1}^m \mu_i^* \widehat{\g L_i} + \mu_{m+1}^* \widehat{\g h} } \\
    & \text{(Substitute the chain rule (\Eqref{eqn:chain_rule}) to the above equation.)} \\
    & = - \sbr{ \sum_{i=1}^m \frac{\mu_i^*}{\norm{\g L_i}} \g L_i + \frac{\mu_{m+1}^*}{\norm{\g h}} \sum_{i=1}^m \frac{\partial h}{\partial L_i} \g L_i} \\
    & = - \sbr{ \sum_{i=1}^m \sbr{\frac{\vmu^*_i}{\norm{\g L_i}} + \frac{\mu_{m+1}^*}{\norm{\g h}} \frac{\partial h}{\partial L_i}} \g L_i } . \\
    & = - \sbr{ \sum_{i=1}^m \mu_i^\prime \g L_i } \quad \mu_i^\prime := \sbr{ \frac{\mu_i^*}{\norm{\g L_i}} + \frac{\mu_{m+1}^*}{\norm{\g h}} \frac{\partial h}{\partial L_i}}. \\
\ee

\subsection{Algorithm and its Complexity Analysis} \label{sec:analysis}
\begin{algorithm}[h!]
    \caption{The Preference based Multiple Gradient Descent Algorithm (PMGDA) } \label{alg:dual}
    \textbf{Input:} A user preference $\vlam$, initial parameter $\vth_0$, iteration round $k=0$. \\
    \While {Stopping condition is not met}{
        \eIf{ $h(\vth_{k}) < \delta$ ($\delta$ is a small margin (e.g., $10^{-2}$). ) } 
        { 
        // Do the prediction step. \\

        Find the direction $\vd$ by solving the prediction MGDA problem (\Eqref{eqn:predict:dual}). 
        } 
        {
        // Do the correction step. \\
        Solve the coefficient $\vmu^*$ in linear programming problem (\Eqref{eqn:correct:mtx}). \\
        Calculate the dynamic weight $\vmu'$ by \Eqref{eqn:dwa}. \\
        $\vd = -\sum_{i=1}^m \mu_i^\prime \nabla L_i$. 
        }
        $\vth_{k+1} = \vth_k + \eta \vd.$ \\ 
        $k \leftarrow k+1$. \\
    }
    \textbf{Output: The final solution $\vth_k$}
\end{algorithm}

We summarize our practical method as \Cref{alg:dual} and call it PMGDA. The stopping condition used in our experiments is that $\|d\| \leq \epsilon$ and $h(\vth_k) \leq \delta$. This condition is met when $d=0$ and $h(\vth_k)=0$, indicating that the solution meets decision maker's demand.

The main time consumption of PMGDA in the `correction' step is the calculation of $\vG^\top \hat{\vG}$, which has a complexity of $\mathcal{O}(({m+1})^2 n)$. This calculation time dominates the optimization time, which has a $\mO^*({(2m+3)}^{2.37})$ complexity. According to out empirical results, this step is converged in only several iterations.
In the `prediction' step, the running time of Pure MGDA \cite{fliege2000steepest,desideri2012mutiple} is $\mathcal{O}(m^2 n)$. It is worth noting that MGDA does not guarantee the position its searched Pareto solution. On the other hand, it is advantageous our method can generate a specific solution satisfying a decision maker's demand without significantly increasing complexity. The complexity of the number of variables is linear, while the complexity with respect to the objective is quadratic.

\subsection{Multi-objective Reinforcement Learning (MORL) Task} \label{sec:morl}
In this section, we demonstrate the adaptability of our proposed framework to MORL tasks, which are very practical real-world problems. In MORL, each state-action pair \((s, a)\) is associated with \(m\) conflicting rewards, denoted as \(r_1(s,a), \ldots, r_m(s,a)\). The objective in MORL is to optimize \(m\) cumulative returns, each defined by:

\begin{equation}
    L_i(\vth) = \mathbb{E}_{\pi \sim \pi_{\vth}(\vlam)} \left[ \sum_{t=1}^{T} \gamma^\top r_i(s_t, a_t) \right], \forall i \in [m],
\end{equation}

\noindent where \(\pi_{\vth}(\vlam)\) represents a policy network parameterized by \(\vth\), mapping states to actions. Here, \(0<\gamma<1\) is the discount factor, and \(T\) signifies the total time steps. The gradient of \(L_i(\vth)\) can be estimated through interaction with the environment. We illustrate this using the most direct policy gradient method \cite{sutton1999policy}, shown as:

\begin{equation}
\g L_i = \mathbb{E} \left[ \sum_{t=0}^{T} \nabla_{\vth} \log \pi_{\vth}(a_t | s_t) (\mtx G_i(s_t) - \vb_i(s_t)) \right],
\end{equation}

\noindent where \(\mtx G_i(s_t) = \sum_{t'=t}^\top \gamma^{t' - t} r_{t', i}\) represents the discounted cumulative reward from timestep \(t\) to \(T\), and \(\vb_i(s_t)\) is a baseline function. Practically, \(\vb_i(s_t)\) is approximated by a value network \(\vv_{\mtx \phi_i}(s)\), and \(\nabla L_i\) is estimated using \(N\) trajectory samples:

\begin{equation} \label{eq:g_rl}
    \nabla L_i \approx \frac{1}{N} \sum_{i=1}^N  \sum_{t=0}^{T-1} \nabla_{\vth} \log \pi_{\vth}(a_t | s_t) (\mtx G_i(s_t) - \vv_{\mtx \phi_i}(s_t)).
\end{equation}

The integration of PMGDA within the context of MORL introduces two distinct differences. Firstly, the computation of the objectives \(L_i\) is based on evaluation metrics, and their gradients \(\nabla L_i\) are derived from a separate evaluation problem as defined in Equation \eqref{eq:g_rl} \footnote{\(\nabla L_i\) can also be estimated by the PPO \cite{schulman2017proximal} or TRPO \cite{schulman2015trust} algorithms to improve sample efficiency}. The gradient $\nabla h(\vth)$ can only calculated with a chain rule of estimated gradients, $(\frac{\partial h}{\partial L_i})$'s and $\g L_i$'s.

Secondly, MORL involves maximization rather than minimization, hence, the constraint formulation needs to be adjusted accordingly. The constraint \(\nabla L_i^\top \vv \leq \alpha, \; i \in [m]\) should be altered to a `greater than' condition in Equations \eqref{eqn:predict} and \eqref{eqn:correct}.

\section{Experiments} \label{sec_exp}
We conducted experiments on three types of problems: synthetic problems, multitask fairness classification problems, and multiobjective reinforcement learning problems. 
For synthetic problems, the decision vector $\vth$ is a 30-D vector. For the other two problems, the decision variables are the parameters of neural networks (this number is around 5000-10000). 

\parag{Metrics} We evaluate the quality of a single solution using the PBI indicator \cite{zhang2007moea}, and the cross angle $\vartheta$ between the objective vector and the preference. We also use the hypervolume (HV) indicator \cite{guerreiro2020hypervolume} to evaluate the convergence of a set of solutions. 

\begin{enumerate}
    \item The cross angle indicator $\vartheta = \text{arccos} \frac{\vL^\top \vlam}{\norm{\vL} \norm{\vlam}}$. A value of 0 for $\vartheta$ indicates an exact solution.
    
    \item The Penalty Boundary Intersection (PBI) indicator \cite{zhang2007moea},
    $$
        I^{\text{PBI}}(\vL,\vlam) = d_1 + \mu d_2,
    $$ 
    where $d_1$ is the projection distance to preference vector, $d_1=\frac{\vL^\top \vlam}{\norm{\vlam}}$, $d_2$ is the distance to the objective vector $\vL$, and $\mu$ is a weight factor, which is set 8.0. The small PBI indicator means the solution is both close to the given preference vector and the true Pareto front. 
    
    \item The HV indicator measures the quality of a set of objectives $\sA$. 
    \begin{equation}
        \mathcal{H}_\vr(\sA) \defeq \Lambda(\{ \vq \; |\; \exists \vp \in \sA : \vp \preceq \vq \; \text{and} \; \vq \preceq \vr \}),
    \end{equation}
    where $\vr$ is a reference point and $\Lambda(\cdot)$ serves as the Lebesgue measure.  
\end{enumerate}

\parag{Baseline methods}
Our code is current provided in the \texttt{pgmda.rar} attached file and will be open-sourced after publication. 
When searching for exact Pareto solutions, we compared our method with the following baseline methods:
\begin{enumerate}
    \item The Exact Pareto Optimization (EPO) \footnote{Code: \url{https://github.com/dbmptr/EPOSearch}.}\cite{pmlr-v119-mahapatra20a} method. This method to search for exact Pareto solutions before our method. EPO has two steps. The first step is to align the solution with a preference and the second step is to push the solution to the Pareto front. These two steps are repeated until an exact Pareto solution is found. We have fixed numerical bugs in the EPO code. Refer to our code (\texttt{pgmda.rar}) for more details.

    \item The COSMOS \footnote{Code: \url{https://github.com/ruchtem/cosmos}.} \cite{ruchte2021cosmos} method. This method that aims to search for approximate exact Pareto solutions. The COSMOS optimizes the scalarization function,
    $$g^{\text{cosmos}}(\vth)=\vlam^\top \vL(\vth)+\mu S_c(\vL(\vth), \vlam).$$
    This function penalizes the distance to the Pareto front and the cosine similarity $S_c$ with a preference $\vlam$. $\mu$ is a weighting factor, which is set to 5.0 following their work \cite{ruchte2021cosmos}. As the contour of COSMOS shown in \Cref{fig:contour}, when this factor is now well selected, the minimal solution of COSMOS does not imply the `exact' Pareto solution. How to set this factor to balance exactness and convergence remains unknown.  
    \begin{figure}[h]
    \centering
    \ig[width=0.24\tw]{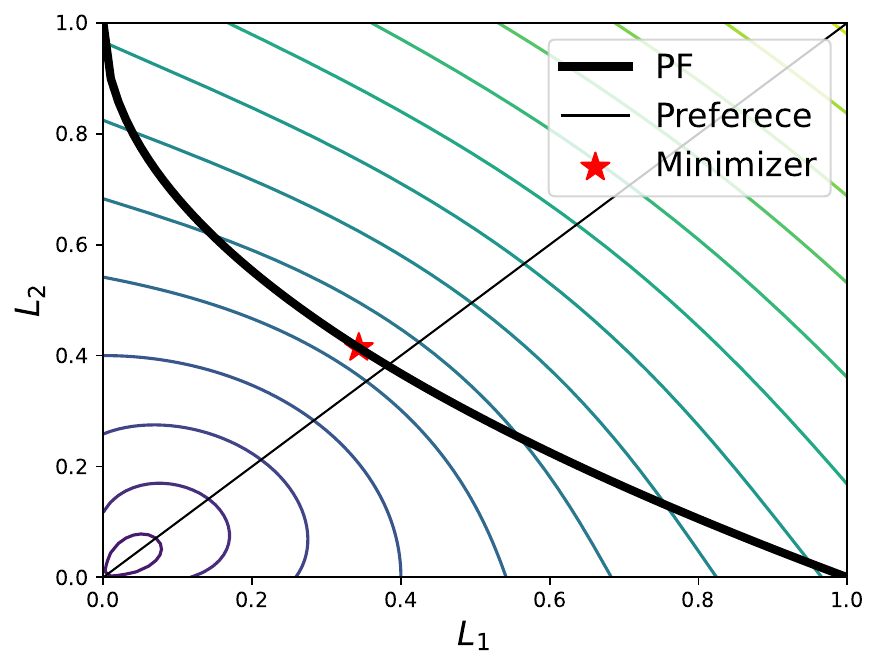}
    \caption{Contour of COSMOS. For a given preference $\vlam$, the minimizer of the COSMOS function does not necessary corresponding to the exact Pareto solution (i.e., the COSMOS contour curve is not tangent to the PF). } 
    \label{fig:contour}
\end{figure}
    \item The modified Tchebycheff (mTche) method. This method aim to minimize the scalarization function,
    $$g^{\text{mTche}}(\vth)= \max_{i \in [m]} \lbr{\frac{L_i(\vth) - z_i}{\lambda_i}},$$ where $\vz$ is a reference point and is typical set as the ideal point of a MOO problem. It is known in \cite{ma2017tchebycheff,zhang2023hypervolume} that under mild conditions, mTche can find the `exact' Pareto solution. 
\end{enumerate}

\subsection{Synthetic Problems} \label{section_synthetic}
In this section, we evaluate the performance of PMGDA on six widely-used gradient-based problems: ZDT1\cite{zitzler2000comparison}, MAF1\cite{cheng2017benchmark}, and DTLZ2 \cite{deb2002scalable}. These problems are commonly employed in the gradient-based MOO literature \cite{lin2019pareto,lin2022paretob,liu2021profiling,pmlr-v119-mahapatra20a,lin2020controllable}.

Initial solutions for VLMOP2 are randomly sampled from the range ${[-0.25, 0.25]}^{30}$. For the other problems, initial solutions are randomly sampled from ${[0, 1]}^{30}$. All methods are run for 500 iterations. For 2-objective problems, 8 preferences are uniformly selected from within the first quadrant, and for 3-objective problems, 15 preferences are uniformly selected from the three-dimensional simplex. The experiments are performed using five different random seeds. The control parameter $\sigma$ in PMGDA is set as 0.9.  

We present the convergence curve of the cross angle, denoted as $\vartheta$, and the hypervolume in Figures \ref{fig:lc:zdt1}, \ref{fig:lc:maf1}, and \ref{fig:lc:dtlz2}. The cross angle reported in these figures corresponds to the maximum cross angle among all preferences. The illusive results on MAF1 is shown in \Cref{fig:res:maf1}. All results are shown in \Cref{tab:res:exact}. 

\begin{table}[]
\caption{Results on synthetic problems. The best solution is marked in bold and the second best is marked with an underline.} \label{tab:res:exact}
\begin{tabular}{llrrrr}
    \toprule
    Problem & Ind. & COSMOS & EPO & mTche & PMGDA \\
    \midrule
    ZDT1 & $\vartheta$ $\downarrow$ & 11.02 & 11.64 & \underline{1.58} & \textbf{1.46} \\
     & PBI $\downarrow$ & 1.08 & 0.82 & \underline{0.73} & \textbf{0.72} \\
    \midrule
    MAF1 & $\vartheta$ & 6.68 & 17.27 & \underline{1.46} & \textbf{1.30} \\
     & PBI & 1.86 & 1.97 & \underline{1.28} & \textbf{1.27} \\
    \midrule
    DTLZ2 & $\vartheta$ & \textbf{1.33} & 33.01 & 28.70 & \underline{1.34} \\
     & PBI & \textbf{1.00} & 1.67 & 1.36 & \underline{1.01} \\
    \bottomrule
\end{tabular}
\end{table}

\begin{figure}%
    \centering
    \sfig[Hypervolume]{\ig[width= \fwdt \tw]{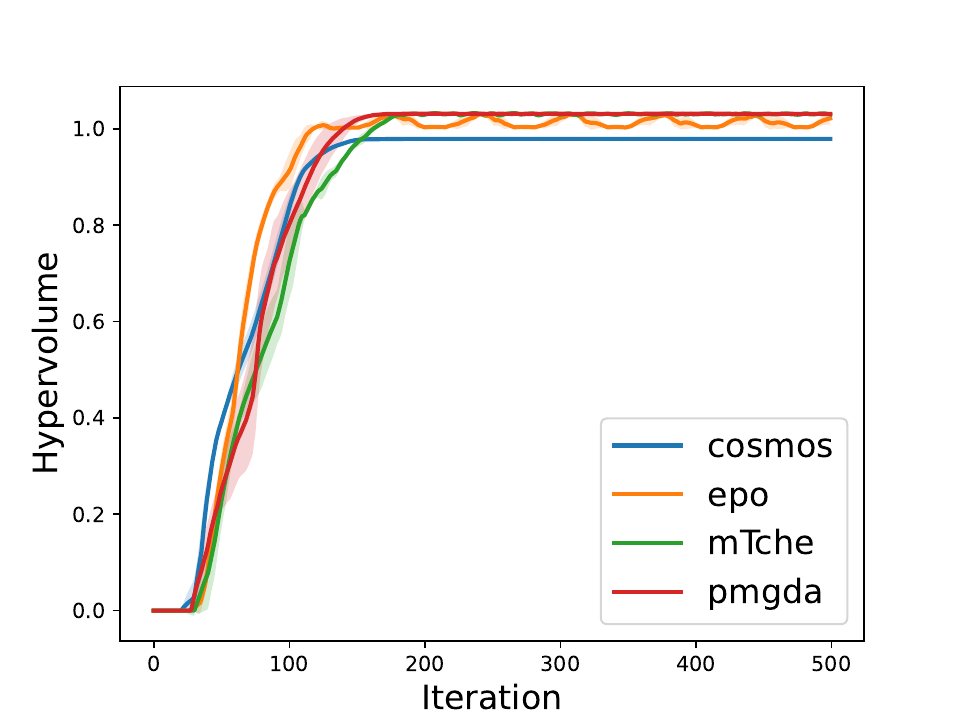}}
    \sfig[Cross angle $\vartheta$]{\ig[width= \fwdt \tw]{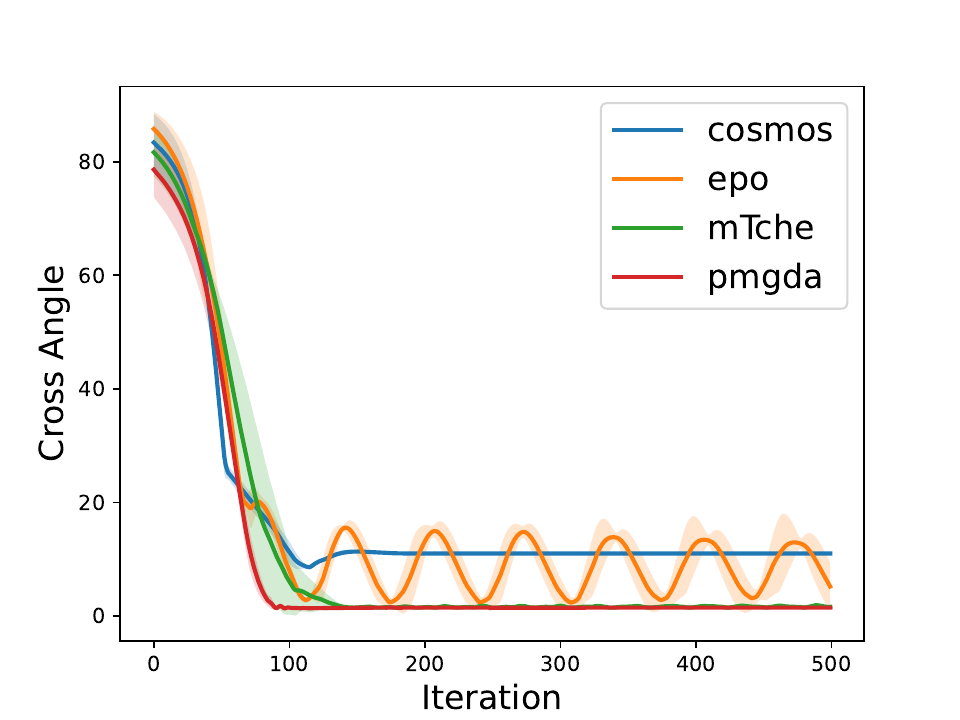}}
    \caption{Learning curves on ZDT1. The proposed method has the best convergence speed and final performance. The convergence curve of EPO fluctuates before convergence. Final solutions of COSMOS are also not exact Pareto solutions.} \label{fig:lc:zdt1}
\end{figure}

\begin{figure}%
    \centering
    \sfig[Hypervolume]{\ig[width= \fwdt \tw]{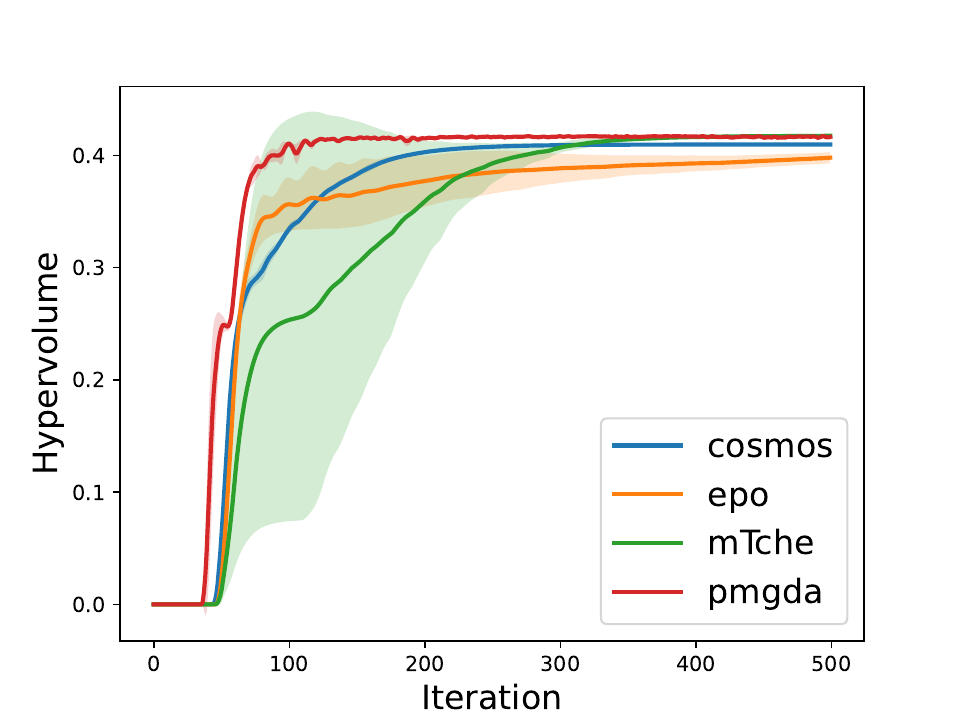}}
    \sfig[Cross angle $\vartheta$]{\ig[width= \fwdt \tw]{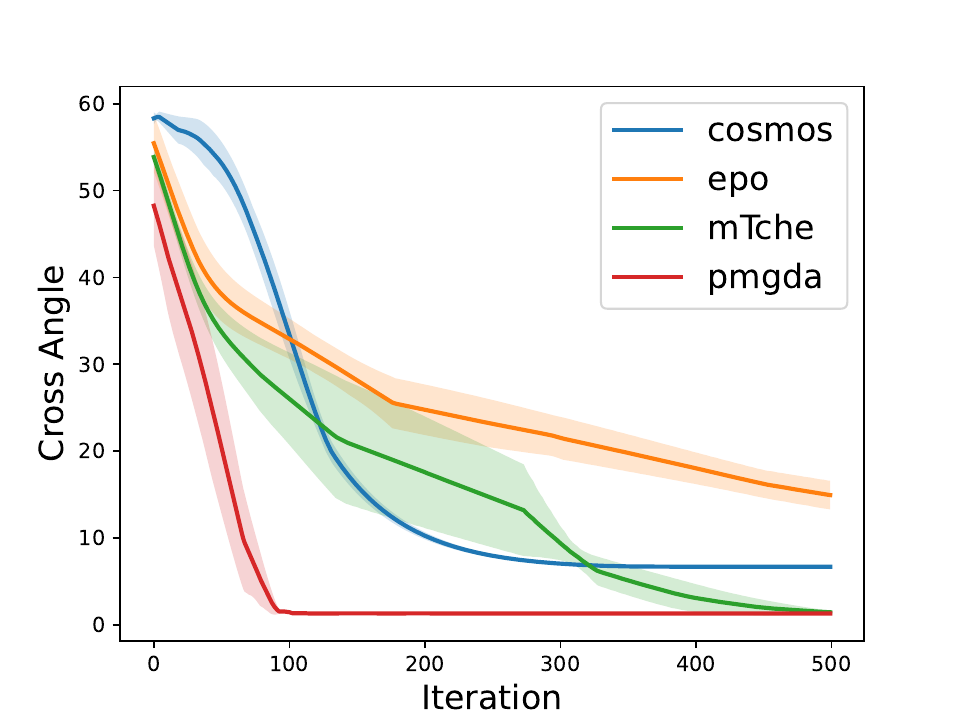}}
    \caption{Learning curves on MAF1. Final solutions by COSMOS and EPO are not exact Pareto solutions. On this three-objective problem, the convergence of mTche is slow, taking 500 iterations to converge, while PMGDA only needs 100 iterations to converge.} \label{fig:lc:maf1}
\end{figure}

\begin{figure}%
    \centering
    \sfig[Hypervolume]{\ig[width= \fwdt \tw]{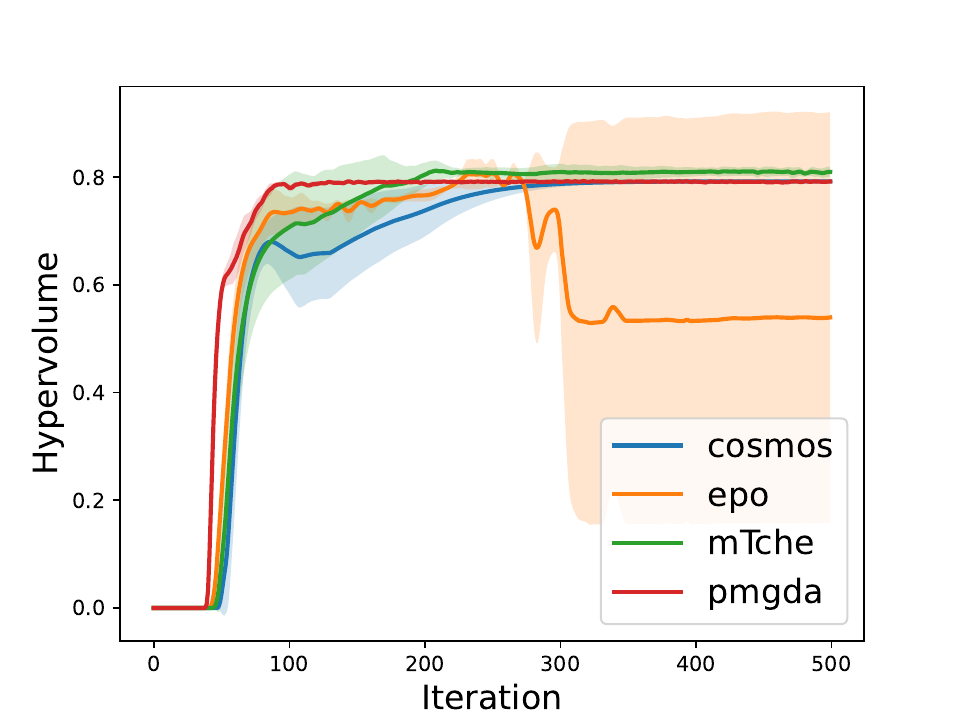}}
    \sfig[Cross angle $\vartheta$]{\ig[width= \fwdt \tw]{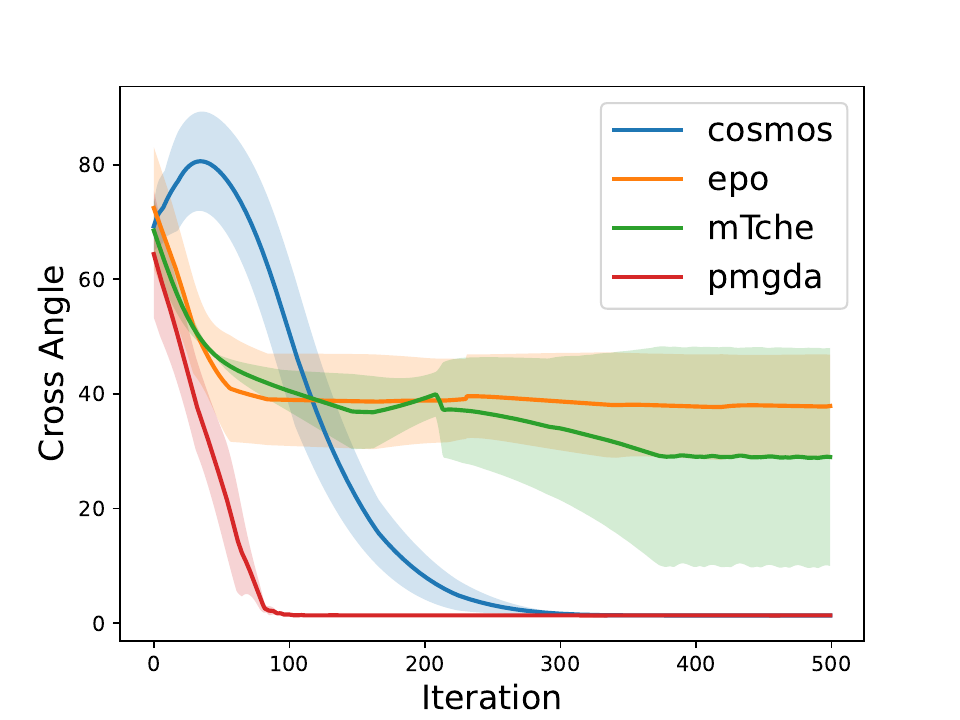}}
    \caption{Learning curves on DTLZ2. PMGDA and COSMOS successfully finds all exact Pareto solutions (PMGDA is 3x faster than COSMOS). However, EPO and mTche fail to find all exact Pareto solutions in 500 iterations. The learning curve of EPO is unstable.} \label{fig:lc:dtlz2}
\end{figure}

\begin{figure}%
    \centering
    \sfig[COSMOS]{\ig[width= \fwdt\tw]{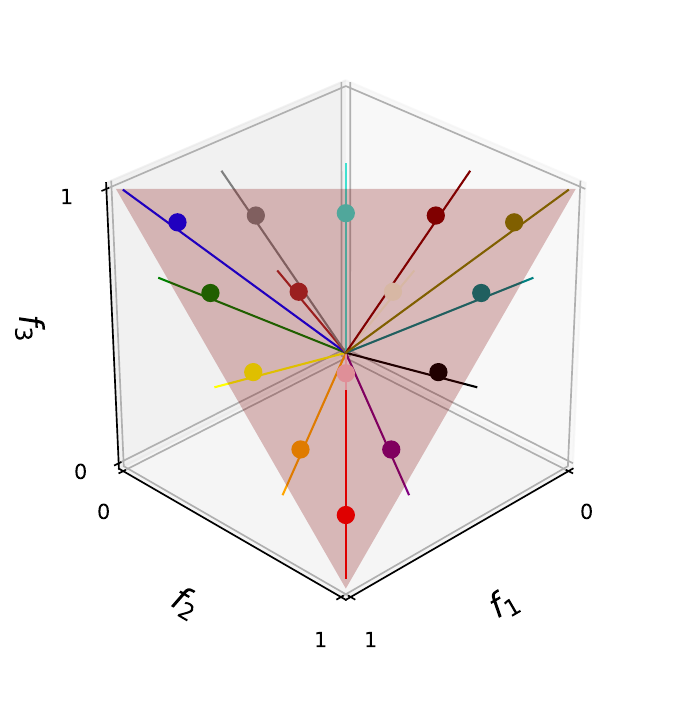}}
    \sfig[EPO]{\ig[width= \fwdt\tw]{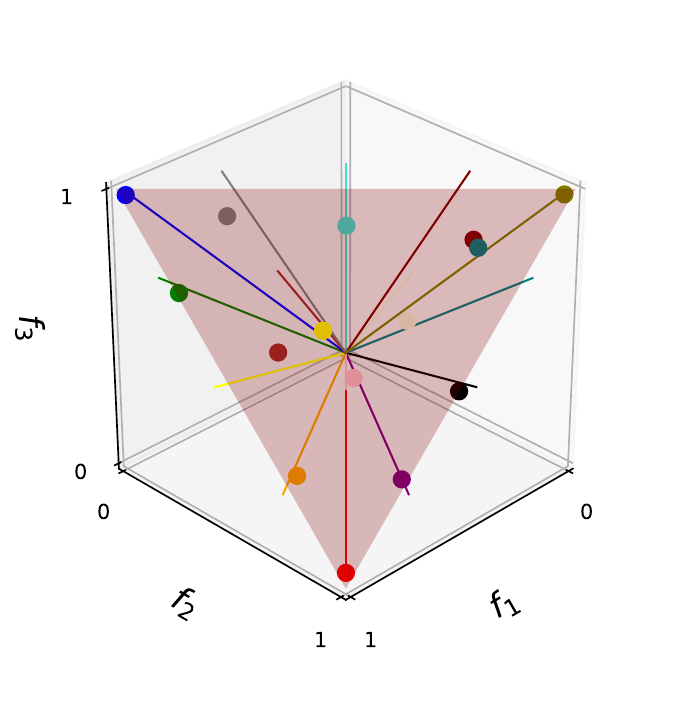}} \\
    \vspace{-10pt}
    \sfig[mTche]{\ig[width= \fwdt \tw]{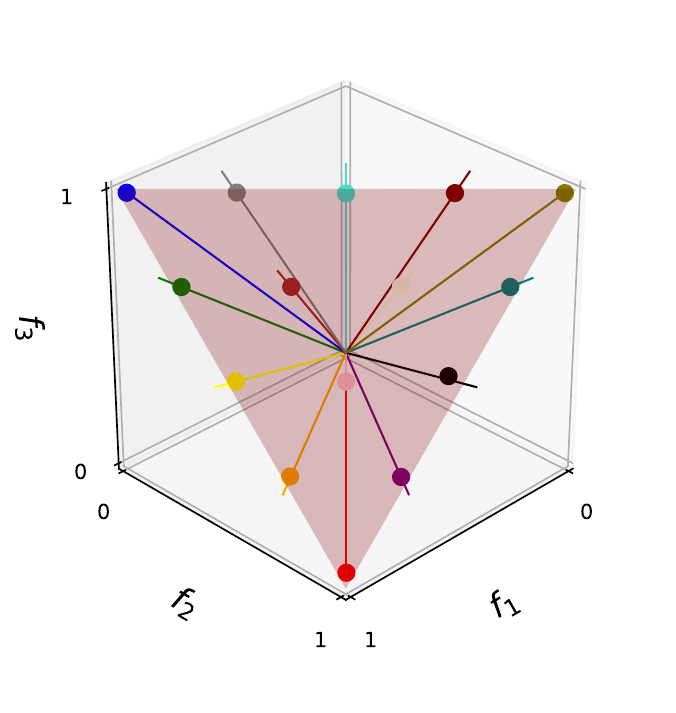}}
    \sfig[PMGDA]{\ig[width= \fwdt \tw]{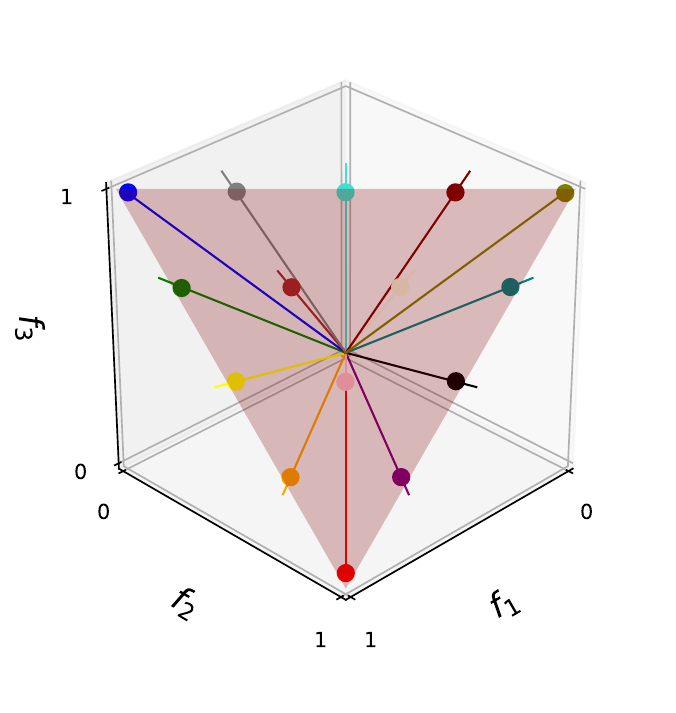}}
    \caption{Illustrative results on the MAF1 problem. Many solutions searched by EPO and COSMOS are not exact Pareto solutions, i.e., solutions are not aligned with preference vectors.} \label{fig:res:maf1}
\end{figure}

The failure of COSMOS is attributed to its difficulty of controlling parameter setting \cite{ruchte2021cosmos}. As shown in \Cref{fig:contour}, even this parameter is set as a large value, the optimizer of the COSMOS objective function does not correspond to the exact solution. 

The original paper suggests a default value of 5.0 for this hyper-parameter. However, COSMOS can only effectively control the average cross angle on the DTLZ2 problem, while it fails to generate precise solutions on the other two problems. Finding the appropriate setting for this hyper-parameter to achieve a balance between convergence, precision with the preference $\vlam$ remains an unresolved challenge.

As shown in \Cref{fig:res:maf1}, EPO cannot produce several `exact' solutions on the MAF1 problem. Additionally, EPO exhibits instability, with fluctuating learning curves on the ZDT1 and DTLZ2 problems, as depicted in \Cref{fig:lc:zdt1} and \Cref{fig:lc:dtlz2}.

When comparing mTche to the proposed method, mTche shows slow convergence due to its strict constraint on the Pareto objectives, which must satisfy the `exact' constraint. In contrast, the proposed method optimizes the constraint function and objectives collaboratively, leading to faster convergence. This is demonstrated in \Cref{fig:lc:maf1} and \Cref{fig:lc:dtlz2}, where the proposed method achieves satisfactory results within 100 iterations, while mTche requires more than 500 iterations.

The efficiency of the proposed method in converging to predefined ROIs is demonstrated in \Cref{fig:res:box}. Notably, the proposed method is the first approach to successfully address this task.
\begin{figure}%
    \centering
    \sfig[Circle constraint]{\ig[width= \fwdt \tw]{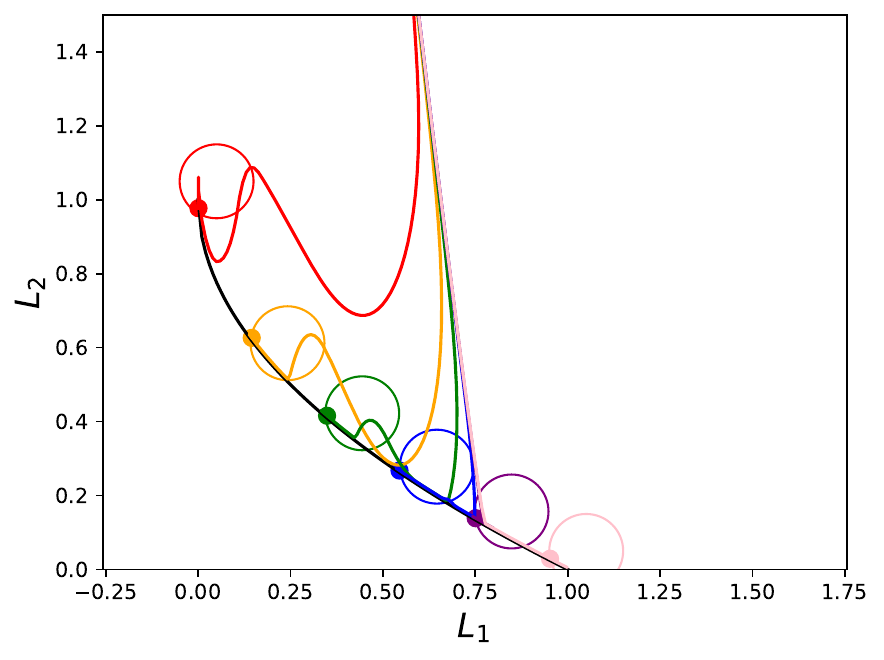}}
    \sfig[Box constraint]{\ig[width= \fwdt \tw]{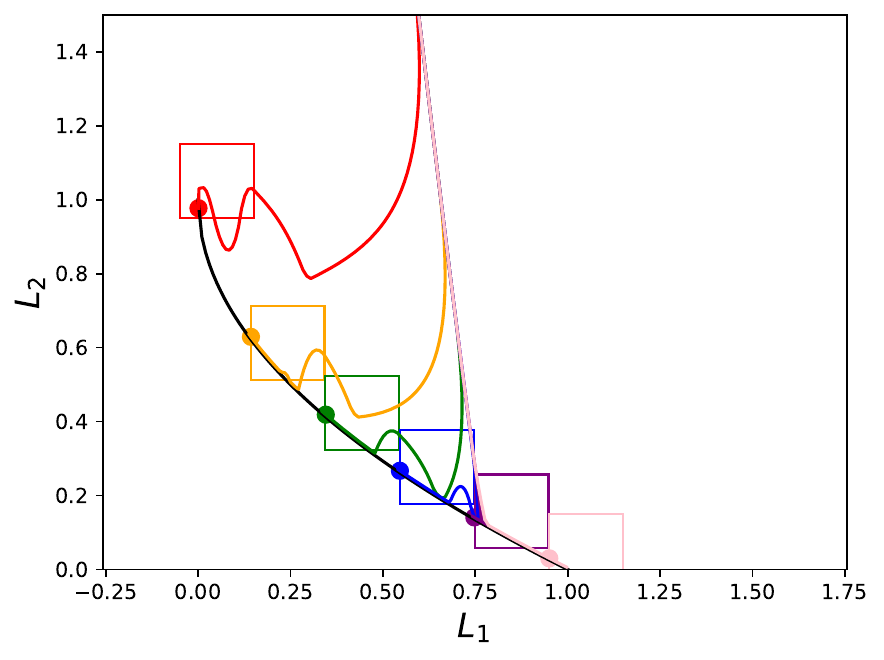}}
    \caption{The optimization trajectory in the ROI task. PMGDA successfully find all Pareto solutions satisfying ROI constraints.} \label{fig:res:box}
\end{figure}

\subsection{Fairness Classification Task}
\begin{figure*}%
    \centering
    \sfig[Adult]{\ig[width= 0.3 \tw]{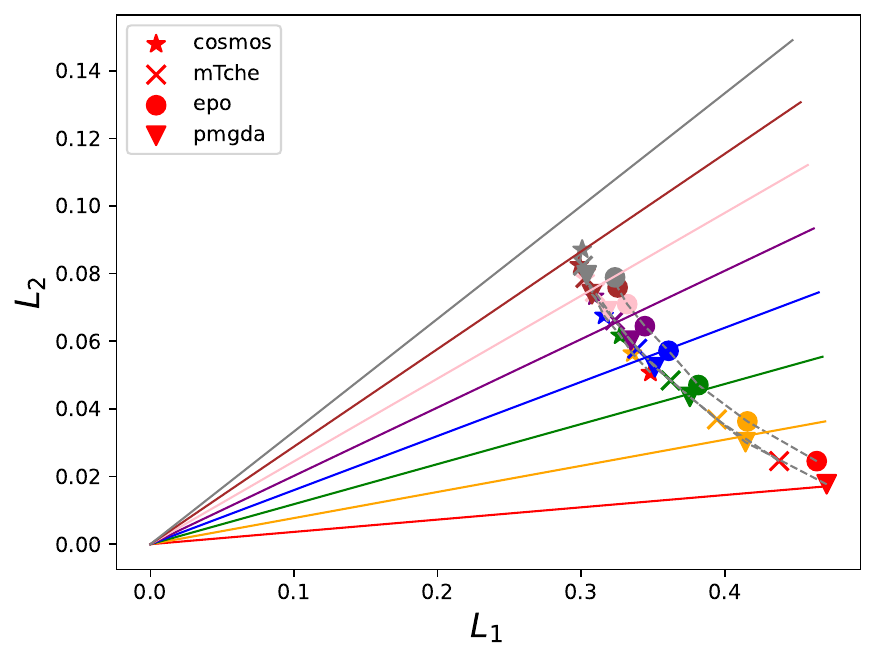}}
    \sfig[Compas]{\ig[width= 0.3 \tw]{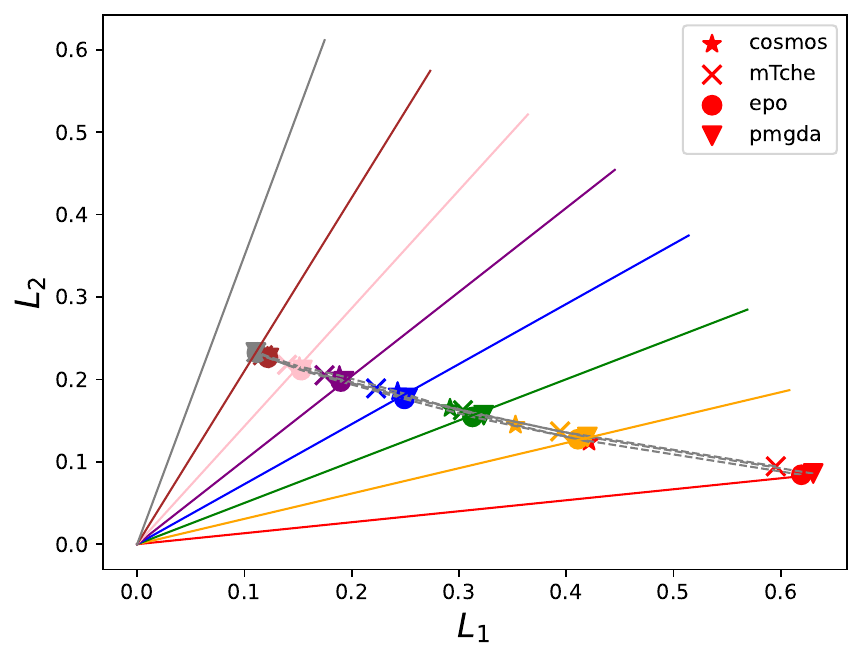}} 
    \sfig[Credit]{\ig[width= 0.3 \tw]{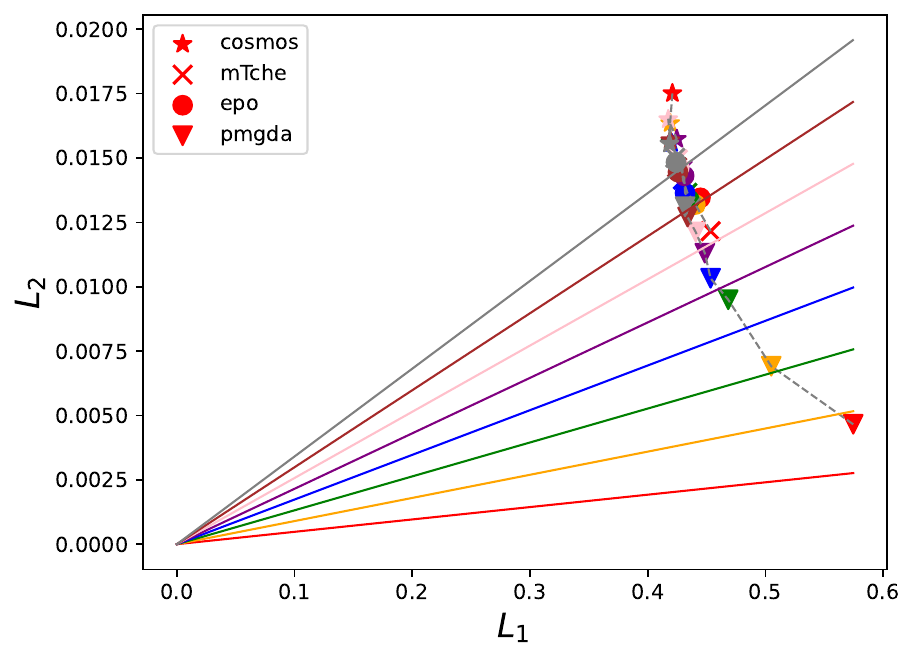}} \\
    \caption{Result comparison by different methods on the fairness classification problems. Generally, PMGDA solutions align better than other methods. PMGDA find a wider PF than other methods on Adult and Credit and converges better than EPO on Adult. On the most simple Compass problem, EPO and PMGDA perform similiarly.} \label{fig:mtl:all}
\end{figure*}

\begin{figure}%
    \centering
    \sfig[EPO results on Adult]{\ig[width= \fwdt \tw]{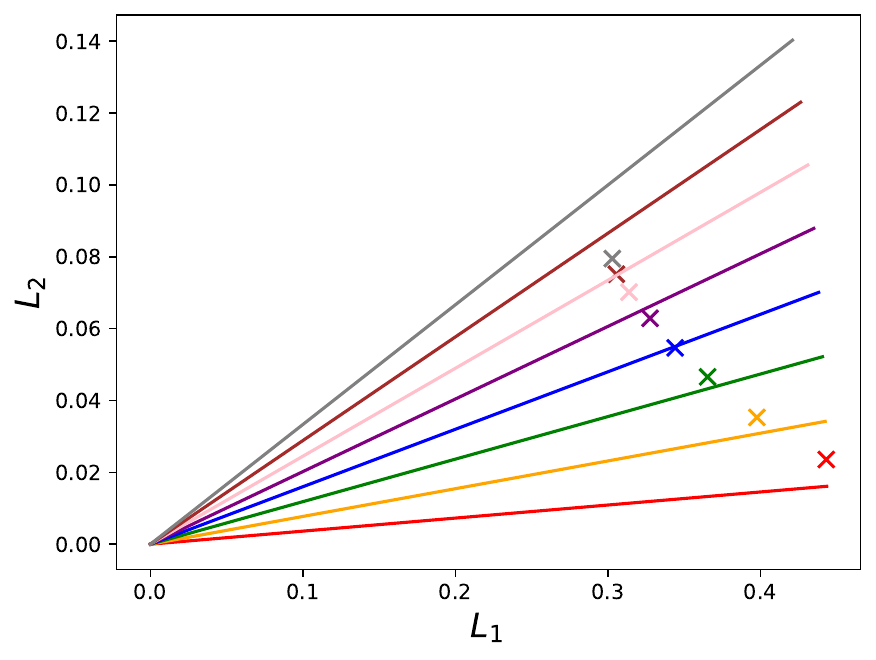}}
    \sfig[PMGDA results on Adult]{\ig[width= \fwdt \tw]{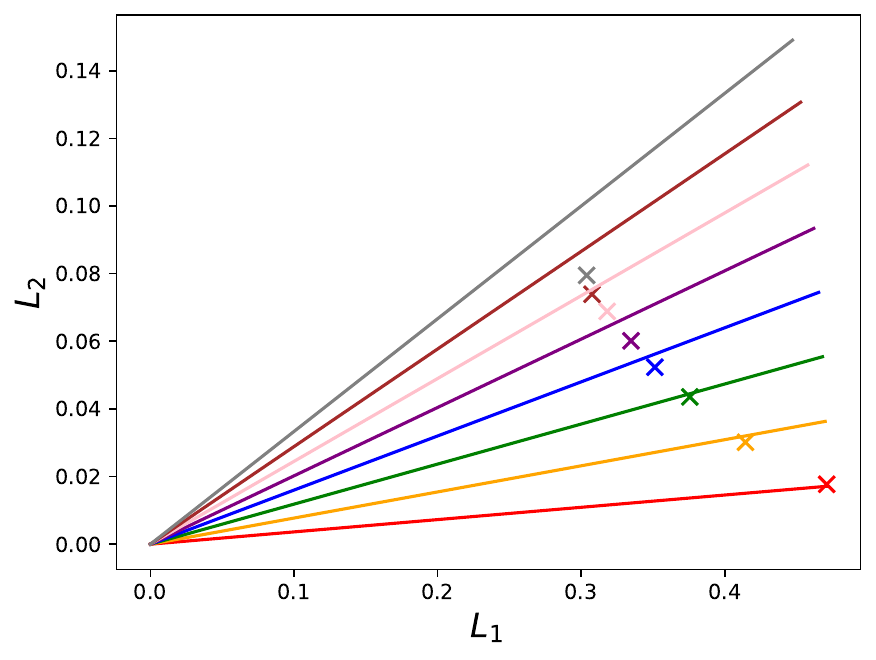}} \\
    \sfig[EPO results on Credit]{\ig[width= \fwdt \tw]{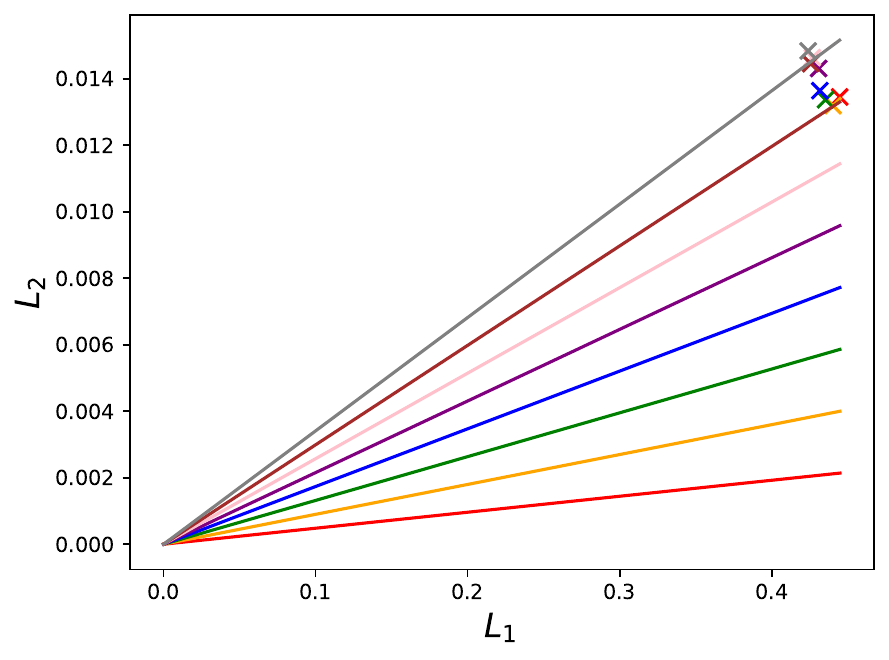}}
    \sfig[PMGDA results on Credit]{\ig[width= \fwdt \tw]{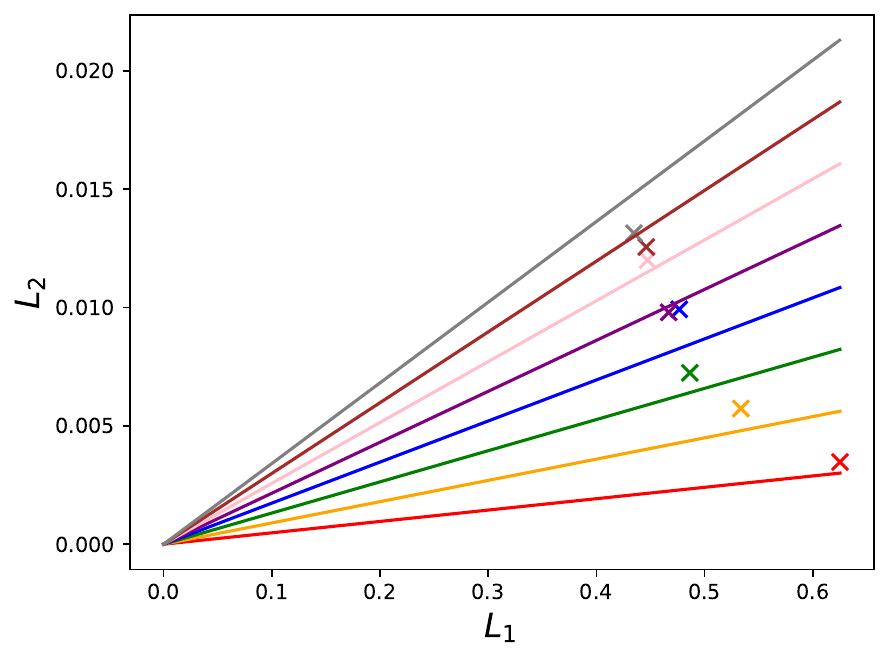}} \\
    \caption{Result comparisons on Adult and Credit. EPO solutions on Credit is only focused on a small region. PMGDA generally find better exact Pareto solutions on most preferences. } \label{fig:mtl:epo}
\end{figure}

\begin{figure}%
    \centering
    \sfig[Adult Circle]{\ig[width= \hwdt \tw]{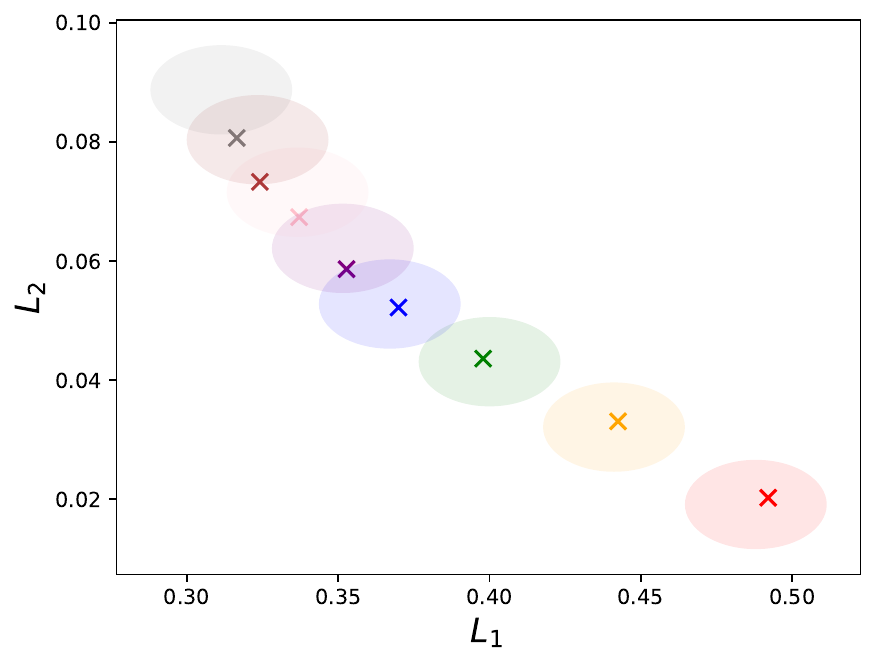}}
    \sfig[Compas Box]{\ig[width= \hwdt \tw]{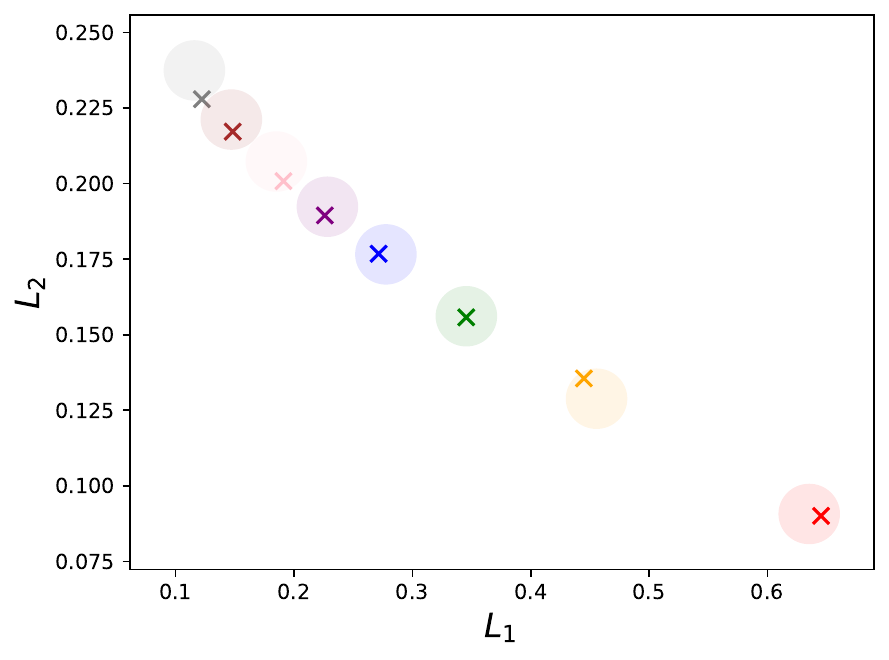}} 
    \sfig[Credit Circle]{\ig[width= \hwdt \tw]{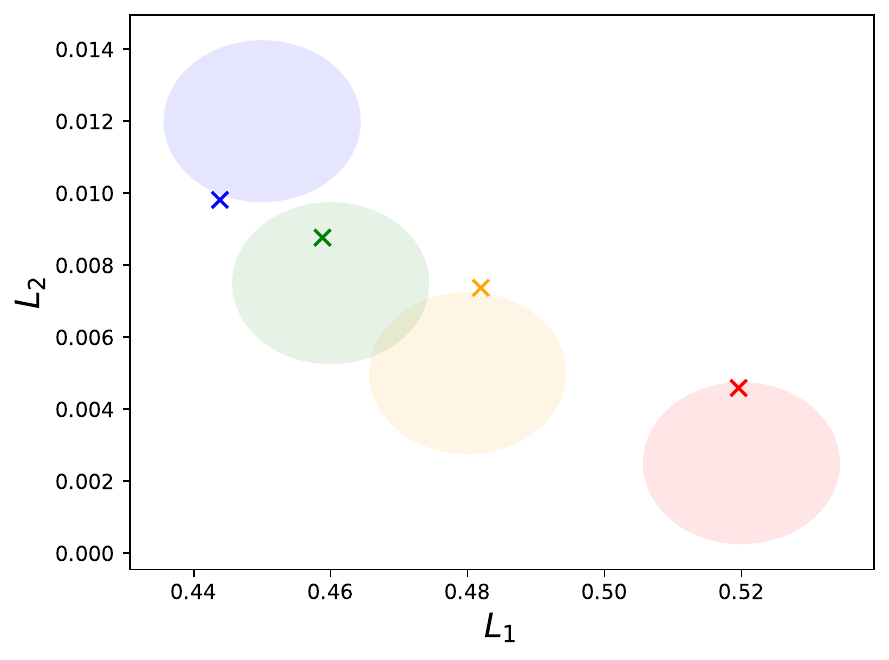}} \\
    \sfig[Adult Box]{\ig[width= \hwdt \tw]{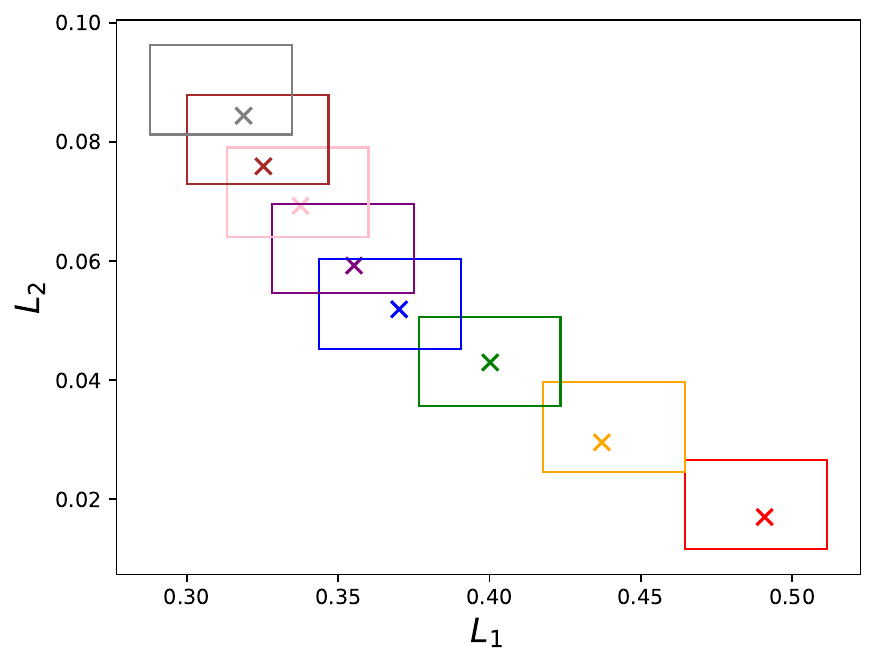}} 
    \sfig[Compas Box]{\ig[width= \hwdt \tw]{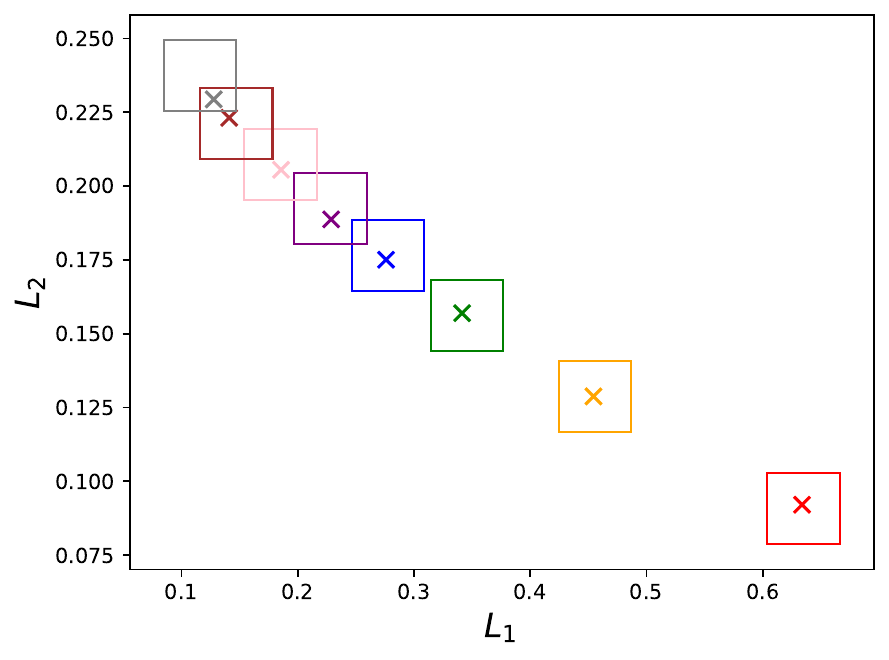}}
    \sfig[Credit Box]{\ig[width= \hwdt \tw]{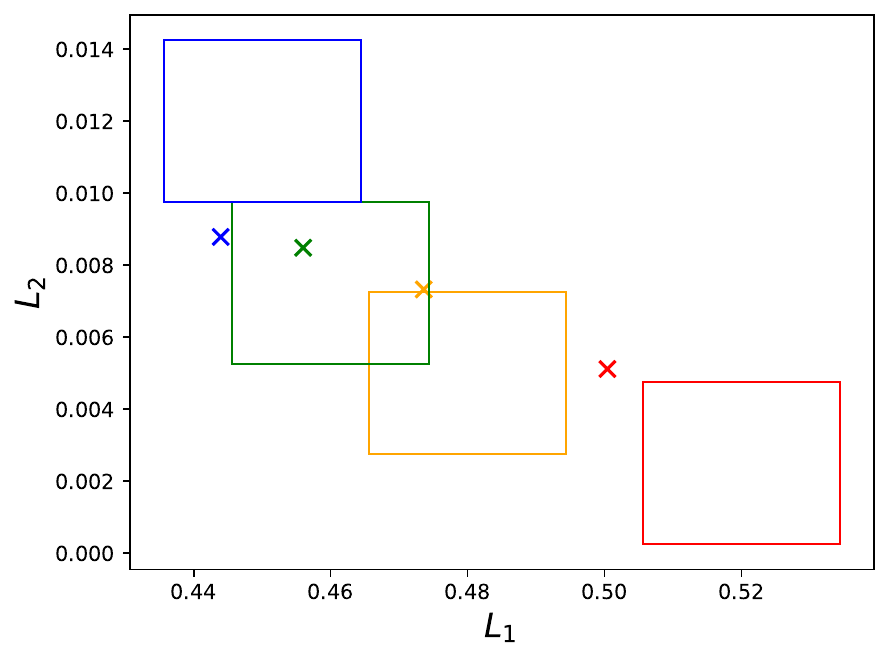}} 
    \caption{ROI constraints for fairness classification problems. PMGDA find Pareto solutions satisfying the ROI constraint on most preferences. PMGDA is the only method working for ROI constraint. } \label{fig:roi:fair}
\end{figure}

\sssec{Setting}
We test the performance of our method on three well-known fairness classification dataset, namely, Adult \cite{asuncion2007uci}, Compas \cite{angwin2016machine}, and Credit \cite{yeh2009comparisons}. The first objective $L_1(\vth)$ is the cross entropy loss. And the second objective $L_2(\vth)$ is the DEO (Difference of Equality of Opportunity) loss \cite{padh2021addressing}. This multiobjective multitask is firstly studied in \cite{ruchte2021cosmos}. During training, batch size is 128. The classification model is a three-layer fully-connected model activated by ReLU. Problem and model details are shown in \Cref{tab:mtl:problem}.

\begin{table}[]
\centering
\caption{Details of fairness classification problems.} \label{tab:mtl:problem}
\begin{tabular}{lllll}
\toprule       
       & Features & Params & Train Size & Test Size \\
\midrule
Adult  & 88      & 6891   & 34188      & 9769      \\
Compas & 20      & 2811   & 4319       & 1235      \\
Credit & 90      &  7011      & 21000      & 6000     \\
\bottomrule
\end{tabular}
\end{table}

\begin{table}[]
\caption{Results on fairness classification problems. The best result is marked in bold, while the second best result is marked with an underline.} \label{tab:mtl:res}
\centering
\begin{tabular}{llrrrr}
\toprule
Method & Indicator & COSMOS & mTche & EPO & PMGDA \\
\midrule
Adult & $\vartheta$ $\downarrow$ & 3.54 & \underline{2.07} & \textbf{2.02} & 2.19 \\
 & PBI $\downarrow$ & 0.46 & \textbf{0.41} & 0.60 & \underline{0.42} \\
Compas & $\vartheta$ & 4.77 & 3.91 & \textbf{2.94} & \underline{3.00} \\
 & PBI & 0.51 & 0.48 & \textbf{0.41} & \underline{0.43} \\
Credit & $\vartheta$ & 1.89 & \underline{1.68} & \underline{1.68} & \textbf{1.46} \\
 & PBI & 0.48 & 0.48 & 0.48 & \textbf{0.48} \\
\bottomrule
\end{tabular}
\end{table}

\sssec{Results}
The numerical results are presented in \Cref{tab:mtl:res}. For a comparative analysis with other methods, refer to \Cref{fig:mtl:all}. Additionally, a specific comparison with EPO, the sole published work claiming to search for exact Pareto solutions, is detailed in \Cref{fig:mtl:epo}.

Analysis of these figures reveals that EPO's results are still unstable, aligning with the observations in MOO-SVGD paper \cite{liu2021profiling}. Specifically, on the Adult problem, EPO's solutions are dominated by PMGDA and other methods, indicating EPO's lack of convergence in this task. In the simpler Compas problem, which features fewer training data and a smaller model size, mTche, EPO, and PMGDA exhibit similar performance. However, in the more challenging Credit problem, as illustrated in \Cref{fig:mtl:all}(c), only PMGDA successfully identifies exact Pareto solutions across all preferences, consequently achieving a significantly broader Pareto front.

Furthermore, the COSMOS method faces a significant challenge in determining its weighting factor, which is a complex process. Another drawback is its lack of theoretical guarantee for reaching exact Pareto solutions. Given that optimal weighting factors vary across different preferences, constantly tuning its hyperparameters for each task becomes overly burdensome.

Another advantage of the proposed PMGDA method is its adaptable design of the preference function \(h(\vth)\). This method is notably the first to successfully identify Pareto solutions that meet specific region of interest (ROI) constraints, as demonstrated in \Cref{fig:roi:fair}. The challenge in applying ROI to multi-task learning problems lies in the stochastic nature of estimated objectives $L_i(\vth)$, which vary with batches. Given this estimation noise, we recommend using a smaller, more conservative ROI scale, as applied here (1/1.5).

\subsection{Multi-objective Reinforcement Learning (MORL)}
In this section, we evaluate the performance of our method on several deep Multi-Objective Reinforcement Learning (MORL) tasks, as described in Xu et al. (2020) \cite{xu2020prediction}. MORL tasks are notably challenging due to the difficulty in training and the typically noisy gradients produced by RL algorithms. This evaluation demonstrates our method's scalability and effectiveness in complex practical scenarios. We focus on the multiobjective Swimmer, HalfCheetah, and Ant tasks. In these tasks, the primary objective is to maximize forward velocity, while the secondary objective is to minimize energy consumption.

Our first finding, illustrated in \Cref{fig:morl_linf}, reveals that our proposed method successfully identifies Pareto solutions within user-defined Regions of Interest (ROIs). Notably, PMGDA is the first gradient-based method capable of meeting such specific demands. In contrast, previous ROI-focused methods relying on evolutionary algorithms, such as those mentioned in \cite{reinaldo2021incorporation, filatovas2020reference, li2018integration}, have been unable to efficiently optimize policy networks in these tasks.  

\begin{figure}%
    \centering
    \sfig[Results on HalfCheetah. ]{\ig[width= \fwdt \tw]{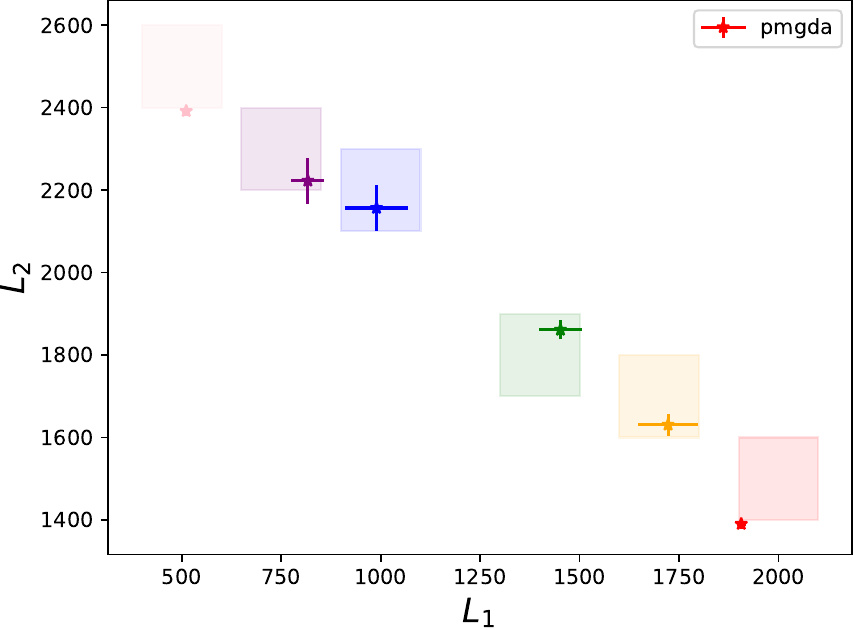}}
    \sfig[Results on Swimmer.]{\ig[width= \fwdt \tw]{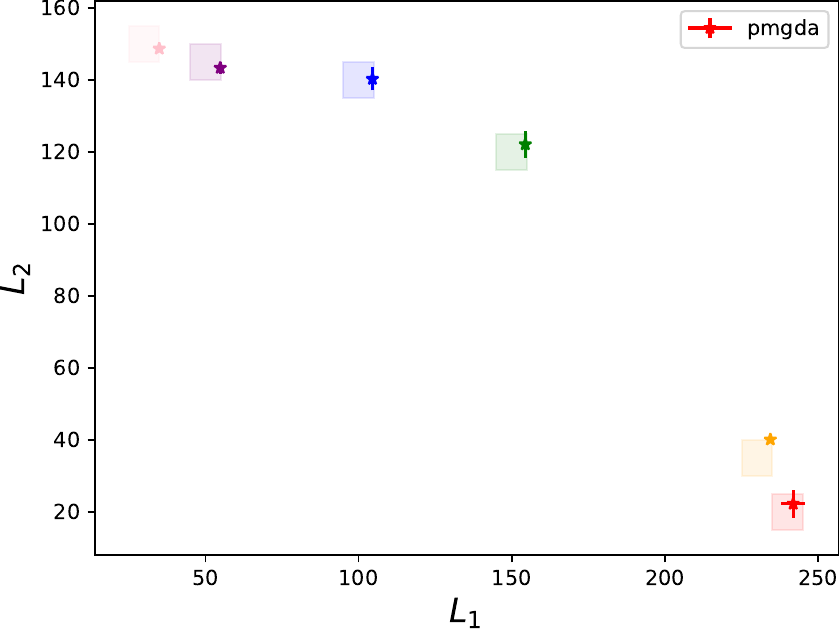}}
    \caption{Average RL results from three independent seeds, showcasing Pareto solutions adhering to ROI constraints in MORL tasks, presented with error bars. } \label{fig:morl_linf}
\end{figure}

We now present another noteworthy observation from our study on identifying specific preferences in the multiobjective Ant problem and give the insight behind this. In this problem, the first and second objectives are rapid leftward and forward movements. The preference vectors under examination, depicted as black dashed lines, are illustrated in \Cref{fig:ant}. Notably, the EPO method, being designed for minimization problems with positive objectives, is not ideally suited for this task (maximization task). Furthermore, tuning COSMOS for this scenario is even challenging due to the negligible impact of the cosine similarity term compared to the convergence term. Therefore, we have not included results from these two methods in our analysis. 

\begin{figure}
    \centering
    \includegraphics[width=0.4 \tw]{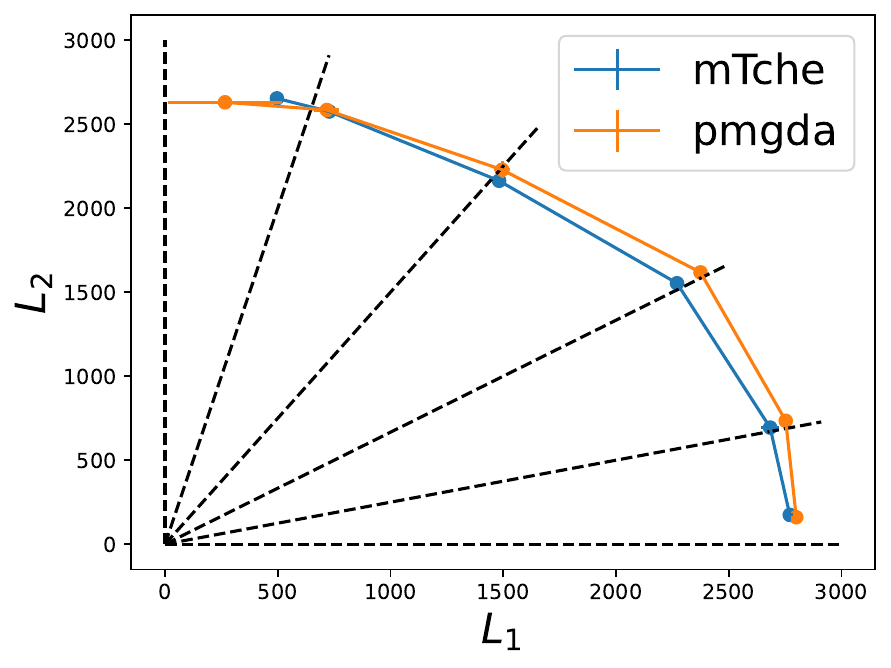}
    \caption{Exact solutions for the MO-Ant problem, averaged from the final 10 iterations' evaluations, displayed with error bars. PMGDA results outperform mTche results on most subproblems with different preferences.}
    \label{fig:ant}
\end{figure}

Our PMGDA method not only achieves faster convergence than the modified Tchebycheff (mTche) method in three-objective synthetic problems (refer to \Cref{section_synthetic}) but also surpasses mTche in terms of final performance. In practice, minimax optimization can be unstable and hard for
neural networks. As illustrated in \Cref{fig:ant}, the majority of PMGDA solutions are dominant over those of mTche. This superiority is primarily due to mTche’s bi-layer optimization structure, which is a max-min problem. In this setup, rapid switching of the arg min index among Pareto objectives occurs frequently after a solution has became `exact', leading to a zig-zag convergence trajectory. Consequently, the 'effective optimization quantities' for both objectives are reduced. Given that optimizing a single objective in RL is already challenging, this oscillation further detracts from mTche’s performance.

Conversely, the PMGDA method, particularly in its correction step, adopts a more flexible approach. It only necessitates that the update angle aligns with the negative gradient of the preference function while satisfying the Armijo condition. This approach makes the objectives are optimized as effectively as possible (see Equation \ref{eqn:correct}). This softer, more adaptable method of handling constraints has proven to yield significantly better final performances, especially in challenging tasks.

\section{Conclusions, Limitations, and Future Works} 
\label{sec_conclusion}
This paper addresses a fundamental problem in MOO: satisfying user-specific preference functions. Unlike previous approaches such as EPO \cite{pmlr-v119-mahapatra20a}, COSMOS \cite{ruchte2021cosmos}, and PMTL \cite{lin2019pareto}, which rely on specific rules to identify Pareto solutions under preference constraints, our proposed framework is versatile enough to incorporate general preference functions. In the well-known task of finding Pareto solutions that precisely align with a preference vector, our PMGDA method surpasses previous methods like EPO and COSMOS in precision, and outperforms mTche in terms of convergence speed. Additionally, our framework is capable of undertaking new challenges, such as identifying Pareto solutions that meet region of interest constraints—a task rarely addressed in gradient-based MOOs (MOOs).

However, the proposed method is not without limitations. As a gradient-based approach, it cannot guarantee globally optimal solutions. Future work will explore integrating gradient-based methods with evolutionary approaches to circumvent local optima and enhance solution quality.

\bibliographystyle{Style/IEEEtran}
\bibliography{pmgda,problem}

\begin{thebibliography}{10}
\providecommand{\url}[1]{#1}
\csname url@samestyle\endcsname
\providecommand{\newblock}{\relax}
\providecommand{\bibinfo}[2]{#2}
\providecommand{\BIBentrySTDinterwordspacing}{\spaceskip=0pt\relax}
\providecommand{\BIBentryALTinterwordstretchfactor}{4}
\providecommand{\BIBentryALTinterwordspacing}{\spaceskip=\fontdimen2\font plus
\BIBentryALTinterwordstretchfactor\fontdimen3\font minus
  \fontdimen4\font\relax}
\providecommand{\BIBforeignlanguage}[2]{{%
\expandafter\ifx\csname l@#1\endcsname\relax
\typeout{** WARNING: IEEEtran.bst: No hyphenation pattern has been}%
\typeout{** loaded for the language `#1'. Using the pattern for}%
\typeout{** the default language instead.}%
\else
\language=\csname l@#1\endcsname
\fi
#2}}
\providecommand{\BIBdecl}{\relax}
\BIBdecl

\bibitem{jin2006multi}
Y.~Jin, \emph{Multi-objective machine learning}.\hskip 1em plus 0.5em minus
  0.4em\relax Springer Science \& Business Media, 2006, vol.~16.

\bibitem{sener2018multi}
O.~Sener and V.~Koltun, ``Multi-task learning as multi-objective
  optimization,'' \emph{Advances in neural information processing systems},
  vol.~31, 2018.

\bibitem{zhou2021multiple}
X.~Zhou, Y.~Gao, C.~Li, and Z.~Huang, ``A multiple gradient descent design for
  multi-task learning on edge computing: Multi-objective machine learning
  approach,'' \emph{IEEE Transactions on Network Science and Engineering},
  vol.~9, no.~1, pp. 121--133, 2021.

\bibitem{ruchte2021cosmos}
M.~Ruchte and J.~Grabocka, ``Scalable pareto front approximation for deep
  multi-objective learning,'' in \emph{2021 IEEE International Conference on
  Data Mining (ICDM)}.\hskip 1em plus 0.5em minus 0.4em\relax IEEE, 2021, pp.
  1306--1311.

\bibitem{liu2018large}
Z.~Liu, P.~Luo, X.~Wang, and X.~Tang, ``Large-scale celebfaces attributes
  (celeba) dataset,'' \emph{Retrieved August}, vol.~15, no. 2018, p.~11, 2018.

\bibitem{zheng2021multi}
Y.~Zheng \emph{et~al.}, ``Multi-objective recommendations: A tutorial,''
  \emph{arXiv preprint arXiv:2108.06367}, 2021.

\bibitem{haolun2022multi}
W.~Haolun, M.~Chen, B.~MITRA, D.~Fernando, and L.~Xue, ``A multi-objective
  optimization framework for multi-stakeholder fairness-aware recommendation,''
  \emph{ACM Transactions on Information Systems}, 2022.

\bibitem{lin2019pareto}
X.~Lin, H.-L. Zhen, Z.~Li, Q.-F. Zhang, and S.~Kwong, ``Pareto multi-task
  learning,'' \emph{Advances in neural information processing systems},
  vol.~32, 2019.

\bibitem{lin2020controllable}
X.~Lin, Z.~Yang, Q.~Zhang, and S.~Kwong, ``Controllable pareto multi-task
  learning,'' \emph{arXiv preprint arXiv:2010.06313}, 2020.

\bibitem{lecun2015deep}
Y.~LeCun, Y.~Bengio, and G.~Hinton, ``Deep learning,'' \emph{nature}, vol. 521,
  no. 7553, pp. 436--444, 2015.

\bibitem{goodfellow2016deep}
I.~Goodfellow, Y.~Bengio, and A.~Courville, \emph{Deep learning}.\hskip 1em
  plus 0.5em minus 0.4em\relax MIT press, 2016.

\bibitem{lopez-paz2018easing}
\BIBentryALTinterwordspacing
D.~Lopez-Paz and L.~Sagun, ``Easing non-convex optimization with neural
  networks,'' 2018. [Online]. Available:
  \url{https://openreview.net/forum?id=rJXIPK1PM}
\BIBentrySTDinterwordspacing

\bibitem{allen2019convergence}
Z.~Allen-Zhu, Y.~Li, and Z.~Song, ``A convergence theory for deep learning via
  over-parameterization,'' in \emph{International Conference on Machine
  Learning}.\hskip 1em plus 0.5em minus 0.4em\relax PMLR, 2019, pp. 242--252.

\bibitem{pmlr-v119-mahapatra20a}
D.~Mahapatra and V.~Rajan, ``Multi-task learning with user preferences:
  Gradient descent with controlled ascent in pareto optimization,'' in
  \emph{Proceedings of the 37th International Conference on Machine Learning},
  ser. Proceedings of Machine Learning Research, H.~D. III and A.~Singh, Eds.,
  vol. 119.\hskip 1em plus 0.5em minus 0.4em\relax PMLR, 13--18 Jul 2020, pp.
  6597--6607.

\bibitem{ma2017tchebycheff}
X.~Ma, Q.~Zhang, G.~Tian, J.~Yang, and Z.~Zhu, ``On tchebycheff decomposition
  approaches for multiobjective evolutionary optimization,'' \emph{IEEE
  Transactions on Evolutionary Computation}, vol.~22, no.~2, pp. 226--244,
  2017.

\bibitem{liu2021conflict}
B.~Liu, X.~Liu, X.~Jin, P.~Stone, and Q.~Liu, ``Conflict-averse gradient
  descent for multi-task learning,'' \emph{Advances in Neural Information
  Processing Systems}, vol.~34, 2021.

\bibitem{milojkovic2019multi}
N.~Milojkovic, D.~Antognini, G.~Bergamin, B.~Faltings, and C.~Musat,
  ``Multi-gradient descent for multi-objective recommender systems,''
  \emph{arXiv preprint arXiv:2001.00846}, 2019.

\bibitem{kendall2018multi}
A.~Kendall, Y.~Gal, and R.~Cipolla, ``Multi-task learning using uncertainty to
  weigh losses for scene geometry and semantics,'' in \emph{Proceedings of the
  IEEE conference on computer vision and pattern recognition}, 2018, pp.
  7482--7491.

\bibitem{fliege2000steepest}
J.~Fliege and B.~F. Svaiter, ``Steepest descent methods for multicriteria
  optimization,'' \emph{Mathematical Methods of Operations Research}, vol.~51,
  no.~3, pp. 479--494, 2000.

\bibitem{desideri2012mutiple}
J.-A. D{\'e}sid{\'e}ri, ``Mutiple-gradient descent algorithm for multiobjective
  optimization,'' in \emph{European Congress on Computational Methods in
  Applied Sciences and Engineering (ECCOMAS 2012)}, 2012.

\bibitem{li2018integration}
K.~Li, R.~Chen, G.~Min, and X.~Yao, ``Integration of preferences in
  decomposition multiobjective optimization,'' \emph{IEEE transactions on
  cybernetics}, vol.~48, no.~12, pp. 3359--3370, 2018.

\bibitem{thiele2009preference}
L.~Thiele, K.~Miettinen, P.~J. Korhonen, and J.~Molina, ``A preference-based
  evolutionary algorithm for multi-objective optimization,'' \emph{Evolutionary
  computation}, vol.~17, no.~3, pp. 411--436, 2009.

\bibitem{miettinen2012nonlinear}
K.~Miettinen, \emph{Nonlinear multiobjective optimization}.\hskip 1em plus
  0.5em minus 0.4em\relax Springer Science \& Business Media, 2012, vol.~12.

\bibitem{roy2023optimization}
A.~Roy, G.~So, and Y.-A. Ma, ``Optimization on pareto sets: On a theory of
  multi-objective optimization,'' \emph{arXiv preprint arXiv:2308.02145}, 2023.

\bibitem{gonzalez2021preference}
S.~Gonzalez-Gallardo, R.~Saborido, A.~B. Ruiz, and M.~Luque, ``Preference-based
  evolutionary multiobjective optimization through the use of reservation and
  aspiration points,'' \emph{IEEE Access}, vol.~9, pp. 108\,861--108\,872,
  2021.

\bibitem{zhang2007moea}
Q.~Zhang and H.~Li, ``Moea/d: A multiobjective evolutionary algorithm based on
  decomposition,'' \emph{IEEE Transactions on evolutionary computation},
  vol.~11, no.~6, pp. 712--731, 2007.

\bibitem{gebken2017descent}
B.~Gebken, S.~Peitz, and M.~Dellnitz, ``A descent method for equality and
  inequality constrained multiobjective optimization problems,'' in
  \emph{Numerical and Evolutionary Optimization}.\hskip 1em plus 0.5em minus
  0.4em\relax Springer, 2017, pp. 29--61.

\bibitem{armijo1966minimization}
L.~Armijo, ``Minimization of functions having lipschitz continuous first
  partial derivatives,'' \emph{Pacific Journal of mathematics}, vol.~16, no.~1,
  pp. 1--3, 1966.

\bibitem{bertsekas2016nonlinear}
D.~P. Bertsekas, ``Nonlinear programming,(optimization and computation),''
  \emph{Athena Scientific, New York}, 2016.

\bibitem{absil2012projection}
P.-A. Absil and J.~Malick, ``Projection-like retractions on matrix manifolds,''
  \emph{SIAM Journal on Optimization}, vol.~22, no.~1, pp. 135--158, 2012.

\bibitem{diamond2016cvxpy}
S.~Diamond and S.~Boyd, ``Cvxpy: A python-embedded modeling language for convex
  optimization,'' \emph{The Journal of Machine Learning Research}, vol.~17,
  no.~1, pp. 2909--2913, 2016.

\bibitem{cohen2021solving}
M.~B. Cohen, Y.~T. Lee, and Z.~Song, ``Solving linear programs in the current
  matrix multiplication time,'' \emph{Journal of the ACM (JACM)}, vol.~68,
  no.~1, pp. 1--39, 2021.

\bibitem{sutton1999policy}
R.~S. Sutton, D.~McAllester, S.~Singh, and Y.~Mansour, ``Policy gradient
  methods for reinforcement learning with function approximation,''
  \emph{Advances in neural information processing systems}, vol.~12, 1999.

\bibitem{schulman2017proximal}
J.~Schulman, F.~Wolski, P.~Dhariwal, A.~Radford, and O.~Klimov, ``Proximal
  policy optimization algorithms,'' \emph{arXiv preprint arXiv:1707.06347},
  2017.

\bibitem{schulman2015trust}
J.~Schulman, S.~Levine, P.~Abbeel, M.~Jordan, and P.~Moritz, ``Trust region
  policy optimization,'' in \emph{International conference on machine
  learning}.\hskip 1em plus 0.5em minus 0.4em\relax PMLR, 2015, pp. 1889--1897.

\bibitem{guerreiro2020hypervolume}
A.~P. Guerreiro, C.~M. Fonseca, and L.~Paquete, ``The hypervolume indicator:
  Problems and algorithms,'' \emph{arXiv preprint arXiv:2005.00515}, 2020.

\bibitem{zhang2023hypervolume}
\BIBentryALTinterwordspacing
X.~Zhang, X.~Lin, B.~Xue, Y.~Chen, and Q.~Zhang, ``Hypervolume maximization: A
  geometric view of pareto set learning,'' in \emph{Thirty-seventh Conference
  on Neural Information Processing Systems}, 2023. [Online]. Available:
  \url{https://openreview.net/forum?id=9ieV1hnuva}
\BIBentrySTDinterwordspacing

\bibitem{zitzler2000comparison}
E.~Zitzler, K.~Deb, and L.~Thiele, ``Comparison of multiobjective evolutionary
  algorithms: Empirical results,'' \emph{Evolutionary computation}, vol.~8,
  no.~2, pp. 173--195, 2000.

\bibitem{cheng2017benchmark}
R.~Cheng, M.~Li, Y.~Tian, X.~Zhang, S.~Yang, Y.~Jin, and X.~Yao, ``A benchmark
  test suite for evolutionary many-objective optimization,'' \emph{Complex \&
  Intelligent Systems}, vol.~3, no.~1, pp. 67--81, 2017.

\bibitem{deb2002scalable}
K.~Deb, L.~Thiele, M.~Laumanns, and E.~Zitzler, ``Scalable multi-objective
  optimization test problems.''\hskip 1em plus 0.5em minus 0.4em\relax
  Proceedings of the Congress on Evolutionary Computation (CEC-2002),(Honolulu,
  USA), 2002, pp. 825--830.

\bibitem{lin2022paretob}
X.~Lin, Z.~Yang, X.~Zhang, and Q.~Zhang, ``Pareto set learning for expensive
  multi-objective optimization,'' \emph{arXiv e-prints}, pp. arXiv--2210, 2022.

\bibitem{liu2021profiling}
X.~Liu, X.~Tong, and Q.~Liu, ``Profiling pareto front with multi-objective
  stein variational gradient descent,'' \emph{Advances in Neural Information
  Processing Systems}, vol.~34, 2021.

\bibitem{asuncion2007uci}
A.~Asuncion and D.~Newman, ``Uci machine learning repository,'' 2007.

\bibitem{angwin2016machine}
J.~Angwin, J.~Larson, S.~Mattu, and L.~Kirchner, ``Machine bias,''
  \emph{ProPublica, May}, vol.~23, no. 2016, pp. 139--159, 2016.

\bibitem{yeh2009comparisons}
I.-C. Yeh and C.-h. Lien, ``The comparisons of data mining techniques for the
  predictive accuracy of probability of default of credit card clients,''
  \emph{Expert systems with applications}, vol.~36, no.~2, pp. 2473--2480,
  2009.

\bibitem{padh2021addressing}
K.~Padh, D.~Antognini, E.~Lejal-Glaude, B.~Faltings, and C.~Musat, ``Addressing
  fairness in classification with a model-agnostic multi-objective algorithm,''
  in \emph{Uncertainty in Artificial Intelligence}.\hskip 1em plus 0.5em minus
  0.4em\relax PMLR, 2021, pp. 600--609.

\bibitem{xu2020prediction}
J.~Xu, Y.~Tian, P.~Ma, D.~Rus, S.~Sueda, and W.~Matusik, ``Prediction-guided
  multi-objective reinforcement learning for continuous robot control,'' in
  \emph{International Conference on Machine Learning}.\hskip 1em plus 0.5em
  minus 0.4em\relax PMLR, 2020, pp. 10\,607--10\,616.

\bibitem{reinaldo2021incorporation}
I.~Reinaldo~Meneghini, F.~Gadelha~Guimar{\~a}es, A.~Gaspar-Cunha, and
  M.~Weiss~Cohen, ``Incorporation of region of interest in a
  decomposition-based multi-objective evolutionary algorithm,'' \emph{Advances
  in Evolutionary and Deterministic Methods for Design, Optimization and
  Control in Engineering and Sciences}, pp. 35--50, 2021.

\bibitem{filatovas2020reference}
E.~Filatovas, O.~Kurasova, J.~Redondo, and J.~Fern{\'a}ndez, ``A reference
  point-based evolutionary algorithm for approximating regions of interest in
  multiobjective problems,'' \emph{Top}, vol.~28, pp. 402--423, 2020.

\end{thebibliography}

\clearpage
\end{document}